\documentclass[sigconf]{acmart}
\AtBeginDocument{%
  }

\setcopyright{acmlicensed}
\copyrightyear{2018}
\acmYear{2018}
\acmDOI{XXXXXXX.XXXXXXX}
\acmConference[Conference acronym 'XX]{Make sure to enter the correct
  conference title from your rights confirmation email}{June 03--05,
  2018}{Woodstock, NY}
\acmISBN{978-1-4503-XXXX-X/2018/06}

\usepackage{makecell}
\usepackage[table]{xcolor}
\usepackage{multirow}
\usepackage{threeparttable}
\usepackage{subfig}
\usepackage{amsmath,amsthm} 
\newcommand{\B}[1]{\mathbf{#1}}      
\newcommand{\M}[1]{\boldsymbol{#1}}      

\newcommand{\lrA}[1]{\ensuremath{\left(#1\right)}}

\newtheorem{theorem}{Theorem}[section]
\newtheorem{proposition}[theorem]{Proposition}
\newtheorem{assumption}[theorem]{Assumption}
\makeatletter
\apptocmd{\@mkabstract}{\vspace{-0.5\baselineskip}}{}{}
\makeatother

\begin{document}

\title{Hypernetwork-Parameterized Spatially Adaptive Neural Operators for PDE Learning}

\settopmatter{authorsperrow=5}

\author{Jiaquan Zhang}
\affiliation{%
  \institution{School of Information and Software Engineering, UESTC}
  \city{Chengdu}
  \country{China}
}

\author{Chaoning Zhang}
\affiliation{%
  \institution{School of Computer Science and Engineering, UESTC}
  \city{Chengdu}
  \country{China}
}

\author{Shuxu Chen}
\affiliation{%
  \institution{Electronics and Information Convergence Engineering, KHU}
  \city{Yongin-si}
  \country{Korea}}

\author{Meng Ye}
\affiliation{%
  \institution{Department of Mechanical Engineering, THU}
  \city{Beijing}
  \country{China}
}

\author{Xiaofeng~Zhang}
\affiliation{%
 \institution{School of Automation and Intelligent Sensing, SJTU}
 \city{Shanghai}
 \country{China}}


\author{Qiang He}
\affiliation{%
  \institution{Department of Mechanical Engineering, THU}
  \city{Beijing}
  \country{China}}

\author{Weifeng Huang}
\affiliation{%
  \institution{Department of Mechanical Engineering, THU}
  \city{Beijing}
  \country{China}}

\author{Guoqing Wang}
\affiliation{%
  \institution{School of Computer Science and Engineering, UESTC}
  \city{Chengdu}
  \country{China}}

\author{Yang Yang}
\affiliation{%
  \institution{School of Computer Science and Engineering, UESTC}
  \city{Chengdu}
  \country{China}
}

\author{Caiyan Qin}
\affiliation{%
  \institution{School of Robotics and Advanced Manufacture, HIT}
  \city{Shenzhen}
  \country{China}
}

\renewcommand{\shortauthors}{Zhang et al.}

\begin{abstract}
Spatially heterogeneous partial differential equations (PDEs) exhibit location-dependent dynamics arising from variations in geometry and physical coefficients. Existing neural operators improve localized modeling through multiscale features, attention mechanisms, or domain decomposition, yet their update rules often remain spatially shared. 
Hypernetwork-based methods adapt parameters across PDE instances but typically generate only one global parameterization per instance. 
Consequently, shared operators may underfit boundaries and high-gradient regions, with these localized errors accumulating during autoregressive rollout.
We propose a spatially adaptive neural operator (SANO), which replaces this spatially shared parameterization with a spatially continuous field of location-dependent operator parameters. 
SANO uses Fourier-encoded coordinates and a coordinate-conditioned hypernetwork to generate spatial operator-conditioning codes at sampling points. A Hyper-Neural Element (HNE) mechanism interpolates these codes within local subregions, coupling neighboring operators while allowing their update rules to vary across space, and partition-of-unity weights assemble the overlapping local predictions.
Experiments on one-, two-, and three-dimensional PDEs and two perforated-domain elliptic benchmarks show that SANO consistently outperforms competitive neural-operator, hypernetwork-based, and physics-informed baselines.
\end{abstract}

\begin{CCSXML}
<ccs2012>
   <concept>
       <concept_id>10010147.10010257.10010293.10010294</concept_id>
       <concept_desc>Computing methodologies~Neural networks</concept_desc>
       <concept_significance>500</concept_significance>
       </concept>
 </ccs2012>
\end{CCSXML}

\ccsdesc[500]{Computing methodologies~Neural networks}

\received{20 February 2007}
\received[revised]{12 March 2009}
\received[accepted]{5 June 2009}

\maketitle

\section{Introduction}
Partial differential equations (PDEs) describe physical processes in computational fluid dynamics, materials simulation, geophysics, and engineering design \cite{brunton2024promising,bezgin2023jax,zhang2022analyses,degen2023perspectives,wu2022learning}. 
Classical numerical methods solve PDEs through numerical discretization, but a new solve is generally required when the initial conditions, boundary conditions or source terms change~\cite{le2025learning}. 
Neural-network-based solvers have therefore become an important complement to traditional simulation pipelines~\cite{zhu2026physicssolver}. 
Neural operators instead learn mappings from PDE inputs to solution fields, allowing a trained model to predict solutions for unseen problem instances without repeating the full numerical solution process \cite{kovachki2023neural,yang2023context,zhou2026tf,zhang2026autoregression}.

The Fourier neural operator (FNO) \cite{DBLP:conf/iclr/LiKALBSA21} performs spectral convolution in the Fourier domain, enabling efficient operator learning while capturing large-scale solution structures. 
Subsequent methods improve neural operators along complementary directions: PINO \cite{li2024physics} incorporates PDE residuals during training, attention-based operators such as GNOT \cite{hao2023gnot} model long-range spatiotemporal interactions, global--local architectures introduce local components to recover high-frequency details \cite{kalimuthu2025loglo}, and adaptive Fourier models have been explored for large-scale forecasting \cite{pathak2022fourcastnet}. 
Despite their different designs, these methods largely retain a spatially shared operator parameterization, applying one set of parameters across the entire domain.
This parameter sharing becomes restrictive for PDEs with spatially varying coefficients or heterogeneous media, where different regions exhibit distinct local dynamics \cite{englert2025spatially,lippe2023pde}. 
A single operator kernel applies the same update rule across regions with distinct local behavior, often missing localized sharp structures. 
The resulting local errors can further accumulate during autoregressive rollout \cite{lippe2023pde}. 
The analysis in Section~\ref{sec:motivation} provides direct evidence of localized underfitting. As shown in Figures~\ref{fig:motivation_field} and~\ref{fig:global_proxy_error}, global proxies fit smooth regions well but concentrate their errors near geometric boundaries and transition regions, where the solution exhibits sharper spatial variations. This correspondence reveals a mismatch between globally shared operator parameters and spatially varying solution complexity, motivating operator parameters that vary continuously with location.
Several recent methods introduce adaptive mechanisms to better capture region-dependent behavior in heterogeneous PDEs. 
Attention-based modulation \cite{hao2023gnot} and domain decomposition adapt features or computation across regions, while hypernetwork-based operators such as HyperDeepONet \cite{lee2023hyperdeeponet} and HyPINO \cite{bischof2026hypino} generate target-network parameters from the input function or PDE specification. 
These methods adapt the model to each problem instance, but they generate one parameter set for the entire domain, leaving operator parameters that vary continuously with spatial location largely unaddressed.

To address the above problem, we propose a spatially adaptive neural operator (SANO). Rather than applying one shared operator across the whole domain, SANO instantiates a location-dependent local operator at each query point, so that regions with distinct dynamics are modeled by distinct local behaviors. A single local operator is shared across all locations, and its behavior at each point is determined by a compact location-dependent code produced from the coordinate, which avoids maintaining an independent operator per location.
Specifically, SANO encodes the coordinate with Fourier features and passes it to a hypernetwork that generates codes at a set of sampling points. A Hyper-Neural Element (HNE) mechanism interpolates these anchor codes into a spatially continuous code field, coupling neighboring locations while preserving local flexibility. Conditioned on the code at each query point, the shared operator produces a local prediction, and partition-of-unity weights blend the overlapping predictions into a globally consistent output. A smoothness regularizer on the code field further stabilizes training and long-horizon rollout, while at inference the coordinates alone generate the codes that condition the shared operator, instantiating the local operators without any additional supervision.
The main contributions of this work are summarized as follows:

\textbf{\textit{(i)}} We propose SANO, a spatially adaptive neural operator that represents the solution operator as a family of location-dependent operators whose local behavior is set by a spatially varying conditioning code, enabling local adaptation while preserving global consistency.
\textbf{\textit{(ii)}} We introduce a hyper-neural element mechanism that constructs spatial conditioning codes for the local operator from shared sampling-point codes and interpolates them in code space, enforcing structured coupling and smooth transitions across neighboring regions.
\textbf{\textit{(iii)}} We introduce a code-field smoothness regularizer that suppresses spatial oscillation of the generated conditioning codes, improving predictive accuracy and rollout stability.
\textbf{\textit{(iv)}} We validate SANO on one-, two-, and
three-dimensional PDE benchmarks, where it improves single-step accuracy, long-horizon rollout stability, and cross-parameter and cross-resolution generalization over competitive baselines.

\begin{figure}[t]
    \centering
    \includegraphics[width=0.99\linewidth]{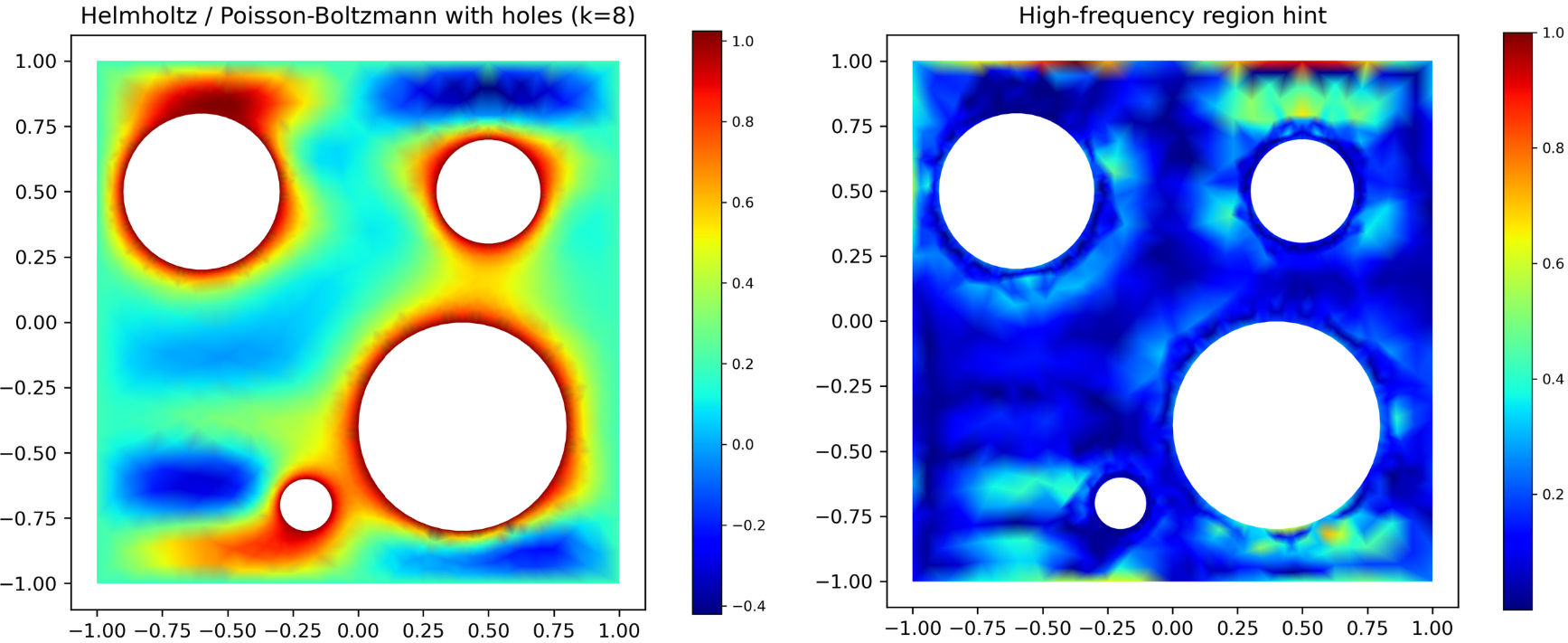}
    \caption{Nonuniform field complexity. Left: target field of a HZ-G problem with holes. Right: high-frequency region hint showing localized regions with stronger spatial variations.}
    \label{fig:motivation_field}
\end{figure}

\begin{figure}[t]
\centering
\includegraphics[width=0.99\linewidth]{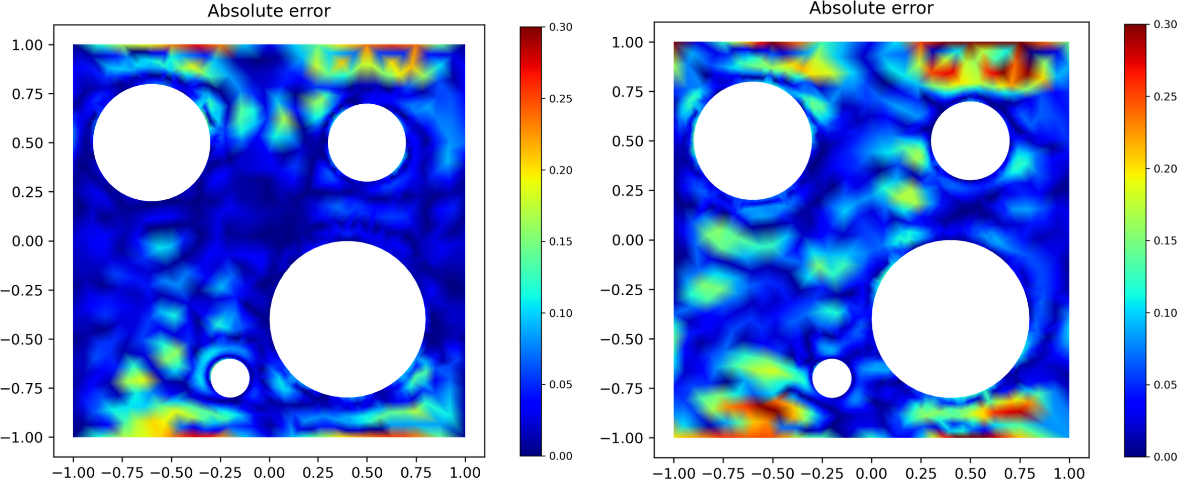}
\caption{Spatial concentration of approximation errors.
The Fourier proxy (left) and MLP proxy (right) produce larger errors near geometric boundaries and high-frequency regions.}
\label{fig:global_proxy_error}
\end{figure}

\section{Related Work}
Neural operators overcome the instance-wise optimization cost of physics-informed neural networks~\cite{raissi2019physics,gao2021phygeonet} by learning function-to-function mappings that generalize across varying initial conditions, boundary conditions, and PDE parameters. DeepONet~\cite{lu2021learning} uses a branch--trunk architecture, whereas the FNO~\cite{DBLP:conf/iclr/LiKALBSA21} applies global frequency-domain convolutions for resolution-independent prediction; physics-informed variants such as PINO~\cite{li2024physics} and PI-DeepONet~\cite{wang2021learning} further incorporate PDE residuals during training. However, most neural operators rely on globally shared parameters, implicitly assuming spatial homogeneity and limiting their accuracy on spatially heterogeneous PDEs. Attention-based~\cite{hemmasian2024multi} and domain-decomposition~\cite{zhou2025rapid} methods adapt features or computation across regions but largely retain shared operator parameters, while hypernetwork approaches such as HyperDeepONet~\cite{lee2023hyperdeeponet} and HyPINO~\cite{bischof2026hypino} generate one parameter set per problem instance rather than spatially varying parameters.
The complete related work section is shown in Appendix~\ref{app:rl}

\section{Motivation}

\label{sec:motivation}

\begin{figure*}[t]
    \centering
    \includegraphics[width=0.85\linewidth]{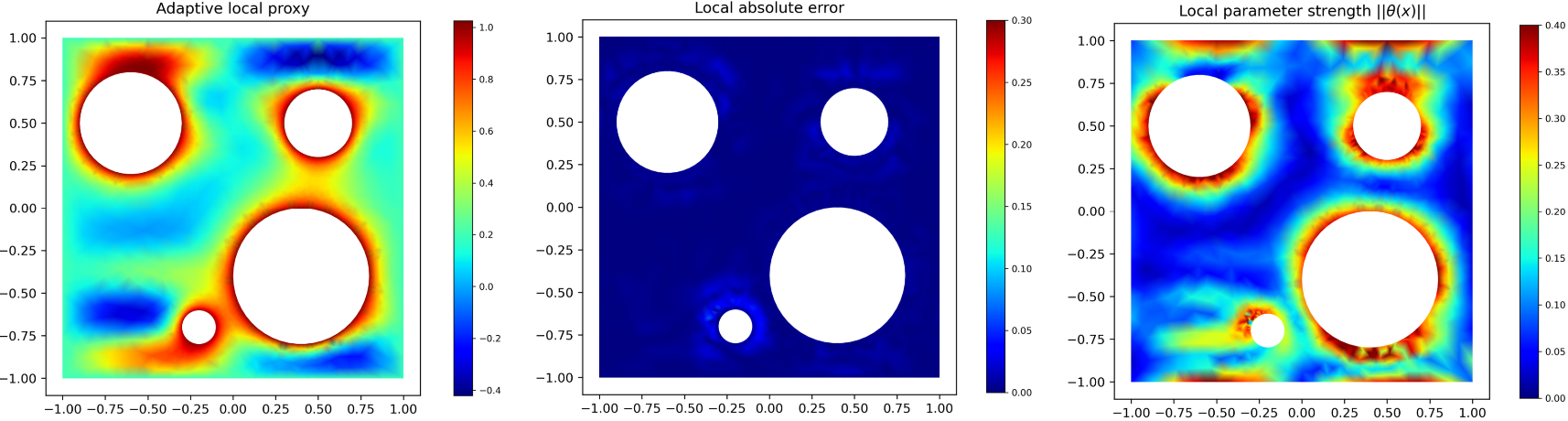}
    \caption{Adaptive local responses. 
    Left: learned adaptive local proxy. 
    Middle: local absolute error of the adaptive model. 
    Right: learned local adaptation strength, with higher values around geometrically complex regions.}
    \label{fig:adaptive_local_response}
\end{figure*}

We consider a representative Helmholtz complex-geometry (HZ-G) benchmark on a perforated domain to illustrate why an operator should adapt its behavior to local solution complexity. 
Although the solution remains smooth over most of the domain, sharper variations arise near geometric boundaries and transition regions. 
We examine this spatial nonuniformity through i) the complexity of the target field, ii) the localized errors of globally shared proxies, and iii) the response of a spatially adaptive model.

\subsection{Nonuniform field complexity}
In complex geometries or multiscale physical systems, the modeling difficulty of a solution field is often spatially nonuniform. 
As shown in Figure~\ref{fig:motivation_field}, the left panel presents a representative HZ-G field with holes, where most interior regions are relatively smooth. At the same time, pronounced local variations appear near hole boundaries, outer boundaries, and transition regions. 
The right panel further highlights this nonuniformity through a high-frequency region hint, showing that stronger spatial variations are mainly concentrated around geometrically complex boundaries. 
This suggests that a uniform operator representation over the entire domain may be inefficient or insufficient: smooth regions require only simple local behavior, whereas rapidly varying regions require stronger local expressiveness. 
Therefore, an effective neural operator should adapt its operator behavior according to spatial location rather than relying on a single globally shared parameterization.

\subsection{Localized errors of global proxies}
To further examine the limitation of globally shared representations, we fit the HZ-G field with holes using two global proxy models and visualize their absolute error maps in Figure~\ref{fig:global_proxy_error}. 
The left panel shows the error of a global Fourier proxy, while the right panel shows the error of a global MLP proxy. 
Although both proxies can approximate most smooth regions, their errors are not uniformly distributed over the domain. 
Instead, large errors are concentrated near hole boundaries, outer boundaries, and localized high-frequency regions. 
This indicates that the main modeling difficulty arises from local geometric complexity rather than uniform global mismatch. 
Therefore, a single globally shared representation may underfit complex local regions, motivating spatially adaptive operator behavior that provides stronger local expressiveness where needed.

\subsection{Adaptive local responses}
We further examine whether the spatially adaptive model learns a localized fitting strategy on the HZ-G problem with holes. 
As shown in Figure~\ref{fig:adaptive_local_response}, the left panel presents the learned adaptive local proxy, where stronger responses appear near hole boundaries, outer boundaries, and transition regions. 
These regions are consistent with the high-frequency and large-gradient structures observed in the target field. 
The middle panel shows the local absolute error, which remains low over most of the domain and is less concentrated than the errors of global proxy models. 
The right panel visualizes the learned local adaptation strength, where higher values are assigned around geometrically complex regions and lower values appear in smoother areas.
This indicates that the model does not use the same level of complexity everywhere, but allocates stronger local expressiveness to regions with higher physical and geometric complexity.

Taken together, these observations indicate that spatially nonuniform solution complexity cannot be fully addressed by feature-level adaptation alone; the local input--output mapping itself should vary across the domain through spatially adaptive operator behavior.

\begin{figure*}[t]
    \centering
    \includegraphics[width=0.82\linewidth]{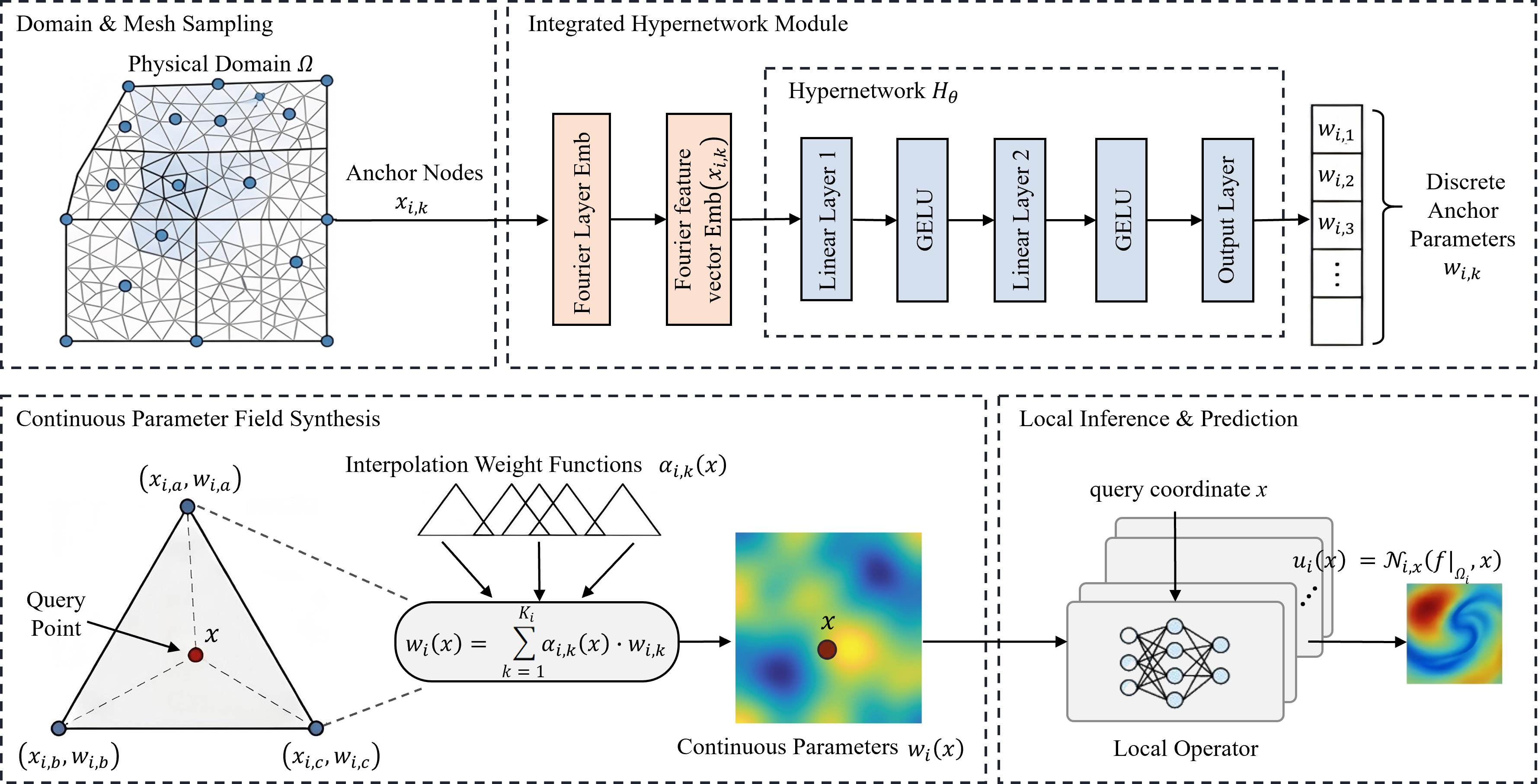}
    \caption{Overall architecture of SANO, including coordinate-conditioned code generation, code-space interpolation, code-conditioned local prediction, and prediction of the next solution field.}
    \label{fig:sano_overview}
\end{figure*}

\section{Method}
\label{sec:sano}

SANO replaces the globally shared parameterization of a standard neural operator with a continuously varying, location-dependent code field that conditions a shared local operator (see Figure\ref{fig:sano_overview}).
Instead of applying a single update rule across the entire domain, SANO conditions this shared operator on a location-dependent code, so that smooth regions and geometrically complex regions are modeled by different local behaviors.
In this section, $\theta$ denotes the learnable parameters of the
hypernetwork and $\nu$ the parameters of the shared local MLP, both of
which stay fixed after training. In contrast, $w(x)$ denotes the spatial
conditioning code generated at location $x$, which varies across space and
modulates the shared local MLP. The superscript $n$ is a time-step index.

\subsection{Problem Setup}
\label{sec:problem_setup}

Let $\Omega\subset\mathbb{R}^{d}$ be a compact spatial domain, and let $X$
and $Y$ denote the input and output function spaces defined over $\Omega$.
A solution operator $\mathcal{G}:X\rightarrow Y$ maps an input function
$f\in X$ to the corresponding solution $u=\mathcal{G}(f)\in Y$.
Conventional neural operators adopt the globally shared form
$\widehat{u}(x)=\mathcal{G}_{\Theta}(f)(x)$, where $\Theta$ is one parameter set shared over all of $\Omega$. This form assumes that every spatial location follows the same update rule. 
SANO represents the solution operator as a
family of location-dependent local operators,
\begin{equation}
\mathcal{G}(f)(x)\approx\mathcal{G}_{x}(f),
\qquad x\in\Omega,
\label{eq:adaptive_family}
\end{equation}
where $\mathcal{G}_{x}$ is the local operator associated with location $x$,
and the mapping $x\mapsto\mathcal{G}_{x}$ is constrained to vary smoothly in code space.
The parameters $\nu$ of the local operator are shared across locations, while $w(x)$ selects which local behavior applies rather than emitting a solution value. 
SANO therefore learns the coordinate-to-code map $x\mapsto w(x)$.

\subsection{Coordinate Encoding}
\label{sec:feature_representation}

To expose multiscale spatial information to the hypernetwork, SANO first encodes the coordinate with Fourier features. 
For any $x\in\mathbb{R}^{d}$, the spatial embedding
$\mathrm{Emb}:\Omega\rightarrow\mathcal{Z}$ is defined as
\begin{equation}
\mathrm{Emb}(x)
=
\bigl[\sin(2\pi\B{B}x),\ \cos(2\pi\B{B}x)\bigr]
\in\mathbb{R}^{2m},
\label{eq:fourier_emb}
\end{equation}
where $\B{B}\in\mathbb{R}^{m\times d}$ is a random frequency matrix whose
entries are drawn independently from the Gaussian distribution
$\mathrm{N}(0,\sigma_{B}^{2})$; $m$ controls the embedding capacity;
$\sigma_{B}$ controls the highest spatial frequency the hypernetwork can
resolve; and $\mathcal{Z}=\mathbb{R}^{2m}$ denotes the positional encoding
space.
The encoding supplies spatial conditioning for code generation rather
than regressing $u(x)$, so it shapes the behavior of the local operator
instead of the predicted value at a single point.

\subsection{Hypernetwork Code Generation}
\label{sec:parameter_generation}

To turn a positional encoding into a location-dependent local operator, SANO composes a coordinate-conditioned hypernetwork with a shared local MLP. Let $H_{\theta}$ denote the hypernetwork. To generate the spatial parameter code in a single forward pass, SANO evaluates $H_{\theta}$ at the embedded coordinate,
\begin{equation}
w(x):=H_{\theta}\!\left(\mathrm{Emb}(x)\right)\in\mathbb{R}^{p},
\label{eq:param_field_def}
\end{equation}
where $w(x)$ is a compact spatial parameter code that conditions the shared local MLP at location $x$, rather than the full weight set of a location-specific operator or a solution value.
The generated code is concatenated with the local input representation and used to condition a shared local MLP $G_{\nu}$, so that the prediction is
\begin{equation}
\widehat{u}(x)=G_{\nu}\!\left(f,x,w(x)\right),
\label{eq:local_model_def}
\end{equation}
where the backbone parameters $\nu$ of $G_{\nu}$ are shared across all spatial locations, while different codes $w(x)$ induce different effective local transformations.

\subsection{Hyper-Neural Element}
\label{sec:hne}

Generating $w(x)$ independently at every query location yields strong local adaptivity but lets the code of neighboring locations change abruptly, which induces discontinuities and numerical oscillation in the predicted
field. 
To balance spatial adaptivity against global smoothness, the
hyper-neural element (HNE) construction generates code anchors at a finite set of sampling points and interpolates them into a continuous parameter-code field.

\paragraph{Domain Decomposition and Anchor Generation}
\label{sec:hne:domain}

To localize the parameterization, SANO covers the domain by a finite collection of subregions $\{\Omega_{i}\}_{i=1}^{M}$ with $\Omega\subset\bigcup_{i=1}^{M}\Omega_{i}$. The subregions are allowed to overlap and are not required to form a strict partition, and the
decomposition discretizes neither the solution nor the governing equation.
Within each subregion, SANO selects a finite sampling set
$S_{i}=\{x_{i,k}\}_{k=1}^{K_{i}}\subset\Omega_{i}$ and evaluates the
hypernetwork at each sampling location,
\begin{equation}
w_{i,k}:=H_{\theta}\bigl(\mathrm{Emb}(x_{i,k})\bigr)\in\mathbb{R}^{p},
\qquad k=1,\dots,K_{i},
\label{eq:anchor_param}
\end{equation}
where $K_{i}$ is the number of sampling points in $\Omega_{i}$ and
$w_{i,k}$ is a code anchor describing how the local operator should
behave near $x_{i,k}$.
The resulting sampling-point code set is
$\mathcal{D}_{i}=\{(x_{i,k},w_{i,k})\}_{k=1}^{K_{i}}$.

\subsection{Parameter Interpolation}
\label{sec:hne:interpolation}

To obtain the code at an arbitrary query location without generating it
independently, SANO introduces a code interpolation operator
$I_{i}:\mathcal{D}_{i}\rightarrow(\Omega_{i}\rightarrow\mathbb{R}^{p})$ and
sets $w_{i}(x):=I_{i}(\mathcal{D}_{i})(x)$. Instantiated as a linear
interpolant,
\begin{equation}
w_{i}(x)
=
\sum_{k=1}^{K_{i}}\alpha_{i,k}(x)\,w_{i,k},
\qquad x\in\Omega_{i},
\label{eq:param_interpolation}
\end{equation}
where $\alpha_{i,k}:\Omega_{i}\rightarrow\mathbb{R}$ are interpolation
weight functions satisfying $\sum_{k=1}^{K_{i}}\alpha_{i,k}(x)=1$ for every
$x\in\Omega_{i}$. 
Requiring in addition that $\alpha_{i,k}(x)\geq 0$ makes
$w_{i}(x)$ a convex combination of the anchors, which prevents the query code from departing far from its neighborhood and improves stability.
The interpolation acts in code space rather than in solution space.
SANO interpolates the anchors into $w_{i}(x)$ and then
conditions the shared local operator on it. 
The local mappings at different locations of
the same subregion are therefore coupled through shared anchors while retaining flexibility for local adaptation.
We provide a one-dimensional linear element example in Appendix~\ref{One-dimensional} to better comprehend the HNE.

\subsection{Conditional Local Evaluation}
\label{sec:hne:realization}
Given the position-dependent code $w_i(x)$ obtained by
code-space interpolation, SANO uses it to condition a shared local
MLP. The local input feature within subregion $\Omega_{i}$ is
$q_{i}(x)=\operatorname{Enc}(f|_{\Omega_{i}},x)$, where $f|_{\Omega_{i}}$ is
the restriction of the input function to $\Omega_{i}$ and
$\operatorname{Enc}(\cdot)$ encodes the local input together with the query
coordinate. This feature is concatenated with the parameter encoding to form
the local-MLP input,
\begin{equation}
h_{i}^{(0)}(x)=\bigl[\,q_{i}(x);\,w_{i}(x)\,\bigr].
\label{eq:local_mlp_input}
\end{equation}
With shared parameters $\nu=\{W^{(\ell)},b^{(\ell)}\}_{\ell=1}^{L}$, the
hidden layers update as
$h_{i}^{(\ell)}(x)=\sigma(W^{(\ell)}h_{i}^{(\ell-1)}(x)+b^{(\ell)})$ for
$\ell=1,\ldots,L-1$, and the output layer gives the local prediction
$u_{i}(x)=W^{(L)}h_{i}^{(L-1)}(x)+b^{(L)}$ for $x\in\Omega_{i}$. For
compactness, we write this position-conditioned local mapping as
\begin{equation}
u_{i}(x)=G_{\nu}\bigl(f|_{\Omega_{i}},x,w_{i}(x)\bigr),
\label{eq:local_realization}
\end{equation}
where all locations share the local-MLP parameters $\nu$, while the
position-dependent encoding $w_{i}(x)$ determines the local input--output
behavior at each location. Because $w_{i}(x)$ is continuously interpolated
from neighboring code anchors, the local mapping varies continuously
across adjacent query locations.
For time-dependent PDEs, SANO learns a one-step time-marching operator: given
the current state $u^{n}$, the next-step local prediction in $\Omega_{i}$ at
$x$ is
$\widehat{u}_{i}^{\,n+1}(x)=G_{\nu}(u^{n}|_{\Omega_{i}},x,w_{i}(x))$.
Fusing the overlapping local predictions with the partition-of-unity weights
yields the global one-step map
$\widehat{u}^{\,n+1}=\mathcal{G}_{\mathrm{SANO}}(\widehat{u}^{\,n})$, which is
applied autoregressively to produce the long-horizon rollout. Since the
parameter encoding depends only on spatial position, the same location reuses
the same spatial conditioning across time steps, while the local MLP maps the
current local state to its time evolution. The model can thus form distinct
local input--output mappings across regions: the encoding varies slowly in
smooth regions, whereas boundaries, interfaces, and high-frequency structures
receive different encodings, strengthening the expressiveness of SANO for
spatially heterogeneous dynamics.

\subsection{Global Assembly via Partition of Unity}
\label{sec:hne:pou}

Because the subregions overlap, a location $x\in\Omega_{i}\cap\Omega_{j}$
receives predictions $u_{i}(x)$ and $u_{j}(x)$ from more than one local
model, and selecting one of them introduces a discontinuity at subregion
boundaries. SANO therefore blends the local predictions with a partition of
unity (PoU). Let $\{\phi_{i}\}_{i=1}^{M}\subset C(\Omega,[0,1])$ be
nonnegative weighting functions with
$\operatorname{supp}(\phi_{i})\subset\Omega_{i}$ satisfying
\begin{equation}
\sum_{i=1}^{M}\phi_{i}(x)=1,
\qquad\forall x\in\Omega,
\label{eq:pou_def}
\end{equation}
where $\phi_{i}(x)$ is the contribution weight of subregion $\Omega_{i}$ at
$x$, and $\phi_{i}(x)=0$ whenever $x$ lies outside the support of
$\phi_{i}$. The global prediction is the weighted combination
\begin{equation}
\widehat{u}(x):=\sum_{i=1}^{M}\phi_{i}(x)\,u_{i}(x),
\qquad x\in\Omega .
\label{eq:global_output}
\end{equation}
Substituting Eqs.~\eqref{eq:anchor_param}--\eqref{eq:local_realization} into
Eq.~\eqref{eq:global_output} gives the complete model,
\begin{equation}
\widehat{u}(x)
=
\sum_{i=1}^{M}\phi_{i}(x)\,
G_{\nu}\!\left(
f|_{\Omega_{i}},x,
\sum_{k=1}^{K_{i}}\alpha_{i,k}(x)\,
H_{\theta}\bigl(\mathrm{Emb}(x_{i,k})\bigr)
\right),
\label{eq:sano_full}
\end{equation}
which exposes the four stages of SANO in order: the hypernetwork generates
anchor codes at the sampling points, HNE interpolates them into a continuous
spatial parameter-code field, the shared local MLP produces conditional local
predictions, and the PoU weights assemble them into a global solution.
It is worth noting that, the PoU assembly does not amplify the largest local error. Let $u^{\star}$
denote the target solution and suppose each local prediction satisfies
$|u_{i}(x)-u^{\star}(x)|\leq\delta_{i}$. Using $\phi_{i}\geq 0$ and
Eq.~\eqref{eq:pou_def},
\begin{equation}
\begin{aligned}
\bigl|\widehat{u}(x)-u^{\star}(x)\bigr|
&=\Bigl|\sum_{i=1}^{M}\phi_{i}(x)\bigl(u_{i}(x)-u^{\star}(x)\bigr)\Bigr|\\
&\leq\sum_{i=1}^{M}\phi_{i}(x)\bigl|u_{i}(x)-u^{\star}(x)\bigr|\\
&\leq\sum_{i=1}^{M}\phi_{i}(x)\,\delta_{i}
\;\leq\;\max_{1\leq i\leq M}\delta_{i},
\end{aligned}
\label{eq:pou_bound}
\end{equation}
where $\delta_{i}$ is the uniform local error bound on $\Omega_{i}$.
The global assembly is therefore no worse than the worst local prediction,
which is why the PoU fuses overlapping local models stably.
The uniform-approximation analysis built on this bound is given in
Appendix~\ref{sec:approx_theory}.

\subsection{Training Objective}
\label{sec:training_objective}
SANO is trained by combining a data-fidelity term with a regularizer that
controls the spatial smoothness of the generated code field $w(x)$,
which suppresses oscillation of the code and improves long-horizon rollout
stability. The total objective combines a data term and a code-field
smoothness regularizer,
\begin{equation}
\mathcal{L}
=
\mathcal{L}_{\mathrm{data}}
+\lambda_{\mathrm{reg}}\mathcal{L}_{\mathrm{reg}},
\label{eq:loss_total}
\end{equation}
where $\lambda_{\mathrm{reg}}$ weights the regularization term. The two
terms are
\begin{subequations}
\label{eq:loss_terms}
\begin{align}
\mathcal{L}_{\mathrm{data}}
&=\frac{1}{N_{\mathrm{d}}}\sum_{j=1}^{N_{\mathrm{d}}}
\bigl\|\widehat{u}(x_{j})-u^{\star}(x_{j})\bigr\|_{2}^{2},
\label{eq:loss_data}\\
\mathcal{L}_{\mathrm{reg}}
&=\frac{1}{N_{\mathrm{r}}}\sum_{j=1}^{N_{\mathrm{r}}}
\bigl\|\nabla_{x}w(x_{j})\bigr\|_{2}^{2},
\label{eq:loss_reg}
\end{align}
\end{subequations}
where $u^{\star}$ is the reference solution;
$\{x_{j}\}_{j=1}^{N_{\mathrm{d}}}$ are the supervised data points;
$\{x_{j}\}_{j=1}^{N_{\mathrm{r}}}$ are the collocation points at which the
smoothness regularizer is evaluated; and $\nabla_{x}$ denotes the gradient
of the generated code field, which is distinct from the discrete
gradient $\nabla_{h}$ used in the evaluation metrics.
Under the HNE construction, Eq.~\eqref{eq:param_interpolation} gives
$\nabla_{x}w_{i}(x)=\sum_{k=1}^{K_{i}}\nabla_{x}\alpha_{i,k}(x)\,w_{i,k}$,
so the gradient of the code field is carried entirely by the
interpolation weights, and Eq.~\eqref{eq:loss_reg} admits the local weighted
form
\begin{equation}
\mathcal{L}_{\mathrm{reg}}
=\frac{1}{N_{\mathrm{r}}}\sum_{j=1}^{N_{\mathrm{r}}}\sum_{i=1}^{M}
\phi_{i}(x_{j})\bigl\|\nabla_{x}w_{i}(x_{j})\bigr\|_{2}^{2},
\label{eq:loss_reg_hne}
\end{equation}
where $\phi_{i}$ are the partition-of-unity weights of
Eq.~\eqref{eq:pou_def}. This form suppresses code oscillation within
each subregion and improves rollout stability.
For time-dependent problems, SANO further uses a multi-step rollout loss.
Given the predicted trajectory
$\{\widehat{u}^{\,1},\dots,\widehat{u}^{\,T}\}$ and the reference trajectory
$\{u^{1},\dots,u^{T}\}$, the loss is
\begin{equation}
\mathcal{L}_{\mathrm{roll}}
=\frac{1}{T}\sum_{n=1}^{T}\frac{1}{N_{\mathrm{d}}}
\sum_{j=1}^{N_{\mathrm{d}}}
\bigl\|\widehat{u}^{\,n}(x_{j})-u^{n}(x_{j})\bigr\|_{2}^{2},
\label{eq:loss_roll}
\end{equation}
where $T$ is the unrolling horizon and the superscript $n$ indexes the
rollout step. Under this setting, $\mathcal{L}_{\mathrm{roll}}$ replaces
$\mathcal{L}_{\mathrm{data}}$ in Eq.~\eqref{eq:loss_total}.

\section{Experiments}
\subsection{Experimental Setup}
\subsubsection{Baseline Models}
\label{sec:baseline_models}
We compare SANO with six neural operators. DeepONet \cite{lu2019deeponet} and FNO \cite{DBLP:conf/iclr/LiKALBSA21} serve as canonical coordinate-based and spectral baselines, respectively. UNO \cite{rahman2022u} and CNO \cite{raonic2023convolutional} represent multiscale and convolutional architectures with spatial feature extraction. HyperDeepONet \cite{lee2023hyperdeeponet} and HyPINO \cite{bischof2026hypino} provide the closest hypernetwork-based comparisons: the former generates target-network parameters, whereas the latter conditions physics-informed neural network parameters on PDE specifications.

\subsubsection{Benchmark PDEs}
\label{sec:benchmark_pdes}
We evaluate SANO on five periodic PDE benchmarks probing distinct
spatially heterogeneous dynamics (Table~\ref{tab:pde_benchmarks}): the
1D Burgers equation (nonlinear transport), the 1D Kuramoto--Sivashinsky
(KS) equation (chaotic dynamics), the 2D Allen--Cahn (AC) equation
(phase separation), the 2D Fisher--KPP (FKPP) equation
(reaction--diffusion front propagation), and a 3D compressible-flow (CF)
benchmark. To induce persistent region-dependent behavior, each
benchmark carries a fixed spatially heterogeneous coefficient or forcing
field
\begin{equation}
c(\B{x})=c_{0}\left[1+\alpha_{c}q_{c}(\B{x})\right],
\label{eq:heterogeneity}
\end{equation}
where
$q_{c}(\B{x})\in[-1,1]$ is a normalized zero-mean field built from
low-order Fourier modes and localized Gaussian components. This field is
generated once, shared across all trajectories and splits, and never
given to the model as input, so trajectory-level variation arises only
from random initial conditions while the operator must recover the
latent, persistent heterogeneity. Governing equations, coefficient
configurations, and physical interpretations are deferred to
Appendix~\ref{app:benchmark_pdes}.

\subsubsection{Evaluation Metrics}
\label{sec:evaluation_metrics}

We report five metrics averaged over the test set to evaluate prediction accuracy and temporal stability~\cite{takamoto2022pdebench,takamoto2023learning}.
Absolute field accuracy is measured by the mean-squared error $\mathrm{MSE}=1/N\sum_{i=1}^{N}(\hat{u}_i-u_i)^2$ and its square root $\mathrm{RMSE}=\sqrt{\mathrm{MSE}}$, where $N$ covers all evaluated spatial points, state components, and rollout steps.
MSE emphasizes large pointwise deviations, whereas RMSE retains the units of the predicted field.
Rollout stability is measured by the final-step RMSE (F-RMSE), which applies RMSE to the last predicted field and captures error accumulation during autoregressive rollout.
Scale-normalized spatial fidelity is evaluated by the relative $L_2$ error $\mathrm{Rel}\text{-}L_2=|\hat{u}-u|_2/|u|_2$ and the relative $H^1$ seminorm error $\mathrm{Rel}\text{-}H^1=|\nabla_h\hat{u}-\nabla_hu|_2/|\nabla_hu|_2$.
Rel-$L_2$ measures global field error, whereas Rel-$H^1$ measures the preservation of spatial gradients and localized structures.
Long-term mean preservation is evaluated by the relative temporal mean drift
$\mathrm{Mean\ Drift}=|\mu(\hat{u})-\mu(u)|_2/(|\mu(u)|*2+\epsilon)$, where $\mu(u)=T^{-1}\sum*{t=1}^{T}u_t$.
This metric captures accumulated bias in the predicted temporal mean and low-frequency structure.
On the perforated-domain benchmarks we additionally report boundary RMSE (B-RMSE) and physics RMSE (P-RMSE), computed on boundary points and on the PDE residual, respectively.

\subsubsection{Implementation Details}
\label{sec:implementation_details}
All experiments use PyTorch on a single NVIDIA RTX 4090 GPU (24\,GB).
Most benchmark provides $1{,}000$ training and $200$ test trajectories,
with distinct random seeds across the training, validation, and test
splits to prevent near-duplicate initial conditions. All models are
trained for $100$ epochs using AdamW~\cite{loshchilov2017decoupled} with
cosine annealing~\cite{loshchilov2017sgdr}, under the same pushforward
strategy with unrolling horizon $T=5$.
All baselines except HyPINO use the MSE objective, and HyPINO additionally
uses its PDE residual.
Training resolutions are $N_x=256$ (1D), $64\times64$ (2D), and native (3D);
at inference, predictions are mapped back to the original resolution via the
zero-shot super-resolution of the spectral
backbone~\cite{DBLP:conf/iclr/LiKALBSA21}.
SANO-specific hyperparameters are grouped by spatial dimension in
Table~\ref{tab:hyperparameters}, since the subregion count, per-subregion
sampling budget, and local-MLP width scale with the domain dimension. All
configurations share a $25\%$ subregion overlap, linear code interpolation,
cosine partition-of-unity weights, local-MLP depth $4$, and
$\lambda_{\mathrm{reg}}=10^{-4}$; the sampling density drops from $4$ points
per axis in 1D to $2$ in 3D to bound the volumetric memory footprint.

\subsubsection{Dataset Generation}
\label{sec:dataset_generation}
For each benchmark, the spatially heterogeneous field is fixed across all
trajectories, while individual samples differ only through their randomly
generated initial conditions, whose distributions are designed to span
diverse spatial scales and structures. The heterogeneous coefficient or
forcing field is not provided to the model as an input channel. Reference
trajectories are produced by benchmark-specific high-accuracy numerical
solvers, and only the recorded frames summarized in
Table~\ref{tab:pde_benchmarks} are exposed during learning. The resulting
task is therefore an initial-state-to-trajectory operator-learning problem
in a latent heterogeneous medium, rather than coefficient-conditioned
solution regression. Full initial-condition distributions and numerical
generation details are provided in Appendix~\ref{app:dataset_generation}.

\begin{table*}[t]
\centering
\caption{Rollout results and model size on 1D, 2D, and 3D PDE benchmarks}
\label{tab:single_step_results}
\begin{threeparttable}
\setlength{\tabcolsep}{4.0pt}
\renewcommand{\arraystretch}{1}
\footnotesize
\begin{tabular}{llllllll}
\toprule
PDE & Model 
& MSE $\downarrow$ 
& RMSE $\downarrow$ 
& F-RMSE $\downarrow$ 
& Rel-\(L_2\) $\downarrow$ 
& Rel-\(H^1\) $\downarrow$ 
& Params (M) \\
\midrule

\multirow{7}{*}{\makecell[l]{1D\\Burgers}}
& DeepONet       & $1.69 \times 10^{-4}$ & $1.26 \times 10^{-2}$ & $1.87 \times 10^{-2}$ & $2.81 \times 10^{-2}$ & $1.69 \times 10^{-1}$ & 0.310 \\
& FNO            & $1.68 \times 10^{-5}$ & $4.08 \times 10^{-3}$ & $5.79 \times 10^{-3}$ & $7.77 \times 10^{-3}$ & $5.87 \times 10^{-2}$ & 0.330 \\
& UNO            & $8.96 \times 10^{-6}$ & $2.96 \times 10^{-3}$ & $4.23 \times 10^{-3}$ & $5.57 \times 10^{-3}$ & $4.09 \times 10^{-2}$ & 0.312 \\
& CNO            & $1.37 \times 10^{-5}$ & $3.66 \times 10^{-3}$ & $5.09 \times 10^{-3}$ & $6.89 \times 10^{-3}$ & $5.23 \times 10^{-2}$ & 0.341 \\
& HyperDeepONet  & $7.23 \times 10^{-5}$ & $8.54 \times 10^{-3}$ & $1.19 \times 10^{-2}$ & $1.81 \times 10^{-2}$ & $1.06 \times 10^{-1}$ & 0.311 \\
& HyPINO         & $3.14 \times 10^{-5}$ & $5.63 \times 10^{-3}$ & $7.48 \times 10^{-3}$ & $1.12 \times 10^{-2}$ & $7.41 \times 10^{-2}$ & 0.388 \\
& SANO
& \cellcolor[HTML]{C6ECE5}$3.84 \times 10^{-7}$
& \cellcolor[HTML]{C6ECE5}$6.19 \times 10^{-4}$
& \cellcolor[HTML]{C6ECE5}$8.48 \times 10^{-4}$
& \cellcolor[HTML]{C6ECE5}$1.22 \times 10^{-3}$
& \cellcolor[HTML]{C6ECE5}$9.47 \times 10^{-3}$
& 0.336 \\
\midrule

\multirow{7}{*}{\makecell[l]{1D\\KS}}
& DeepONet       & $1.49 \times 10^{-2}$ & $1.22 \times 10^{-1}$ & $1.55 \times 10^{-1}$ & $1.35 \times 10^{-1}$ & $4.18 \times 10^{-1}$ & 0.317 \\
& FNO            & $2.58 \times 10^{-3}$ & $5.10 \times 10^{-2}$ & $6.84 \times 10^{-2}$ & $5.83 \times 10^{-2}$ & $1.85 \times 10^{-1}$ & 0.330 \\
& UNO            & $3.48 \times 10^{-3}$ & $5.88 \times 10^{-2}$ & $7.94 \times 10^{-2}$ & $6.51 \times 10^{-2}$ & $1.62 \times 10^{-1}$ & 0.312 \\
& CNO            & $6.44 \times 10^{-3}$ & $8.00 \times 10^{-2}$ & $1.05 \times 10^{-1}$ & $8.91 \times 10^{-2}$ & $2.64 \times 10^{-1}$ & 0.316 \\
& HyperDeepONet  & $7.57 \times 10^{-3}$ & $8.66 \times 10^{-2}$ & $1.24 \times 10^{-1}$ & $9.83 \times 10^{-2}$ & $3.13 \times 10^{-1}$ & 0.329 \\
& HyPINO         & $4.49 \times 10^{-3}$ & $6.73 \times 10^{-2}$ & $9.24 \times 10^{-2}$ & $7.18 \times 10^{-2}$ & $2.05 \times 10^{-1}$ & 0.388 \\
& SANO
& \cellcolor[HTML]{C6ECE5}$1.82 \times 10^{-4}$
& \cellcolor[HTML]{C6ECE5}$1.35 \times 10^{-2}$
& \cellcolor[HTML]{C6ECE5}$1.85 \times 10^{-2}$
& \cellcolor[HTML]{C6ECE5}$1.61 \times 10^{-2}$
& \cellcolor[HTML]{C6ECE5}$5.82 \times 10^{-2}$
& 0.349 \\
\midrule

\multirow{7}{*}{\makecell[l]{2D\\AC}}
& DeepONet       & $1.44 \times 10^{-2}$ & $1.17 \times 10^{-1}$ & $1.85 \times 10^{-1}$ & $1.45 \times 10^{-1}$ & $4.76 \times 10^{-1}$ & 9.356 \\
& FNO            & $2.02 \times 10^{-4}$ & $1.42 \times 10^{-2}$ & $1.85 \times 10^{-2}$ & $1.80 \times 10^{-2}$ & $1.05 \times 10^{-1}$ & 9.582 \\
& UNO            & $5.04 \times 10^{-5}$ & $7.09 \times 10^{-3}$ & $9.49 \times 10^{-3}$ & $8.68 \times 10^{-3}$ & $4.57 \times 10^{-2}$ & 9.388 \\
& CNO            & $5.52 \times 10^{-4}$ & $2.35 \times 10^{-2}$ & $3.23 \times 10^{-2}$ & $2.85 \times 10^{-2}$ & $1.17 \times 10^{-1}$ & 9.567 \\
& HyperDeepONet  & $2.36 \times 10^{-3}$ & $4.91 \times 10^{-2}$ & $6.84 \times 10^{-2}$ & $5.56 \times 10^{-2}$ & $2.77 \times 10^{-1}$ & 9.644 \\
& HyPINO         & $1.44 \times 10^{-3}$ & $3.79 \times 10^{-2}$ & $5.19 \times 10^{-2}$ & $4.33 \times 10^{-2}$ & $2.44 \times 10^{-1}$ & 9.561 \\
& SANO
& \cellcolor[HTML]{C6ECE5}$6.76 \times 10^{-6}$
& \cellcolor[HTML]{C6ECE5}$2.63 \times 10^{-3}$
& \cellcolor[HTML]{C6ECE5}$3.76 \times 10^{-3}$
& \cellcolor[HTML]{C6ECE5}$3.19 \times 10^{-3}$
& \cellcolor[HTML]{C6ECE5}$1.65 \times 10^{-2}$
& 9.515 \\
\midrule

\multirow{7}{*}{\makecell[l]{2D\\FKPP}}
& DeepONet       & $3.36 \times 10^{-3}$ & $5.81 \times 10^{-2}$ & $9.81 \times 10^{-2}$ & $1.35 \times 10^{-1}$ & $3.79 \times 10^{-1}$ & 9.356 \\
& FNO            & $4.84 \times 10^{-6}$ & $2.18 \times 10^{-3}$ & $3.64 \times 10^{-3}$ & $6.19 \times 10^{-3}$ & $2.79 \times 10^{-2}$ & 9.582 \\
& UNO            & $3.24 \times 10^{-6}$ & $1.77 \times 10^{-3}$ & $2.78 \times 10^{-3}$ & $5.00 \times 10^{-3}$ & $2.31 \times 10^{-2}$ & 9.388 \\
& CNO            & $3.42 \times 10^{-4}$ & $1.85 \times 10^{-2}$ & $3.23 \times 10^{-2}$ & $5.53 \times 10^{-2}$ & $1.77 \times 10^{-1}$ & 9.567 \\
& HyperDeepONet  & $2.89 \times 10^{-4}$ & $1.72 \times 10^{-2}$ & $2.84 \times 10^{-2}$ & $4.89 \times 10^{-2}$ & $1.61 \times 10^{-1}$ & 9.644 \\
& HyPINO         & $2.69 \times 10^{-3}$ & $5.24 \times 10^{-2}$ & $8.54 \times 10^{-2}$ & $1.20 \times 10^{-1}$ & $3.21 \times 10^{-1}$ & 9.561 \\
& SANO
& \cellcolor[HTML]{C6ECE5}$6.72 \times 10^{-7}$
& \cellcolor[HTML]{C6ECE5}$8.19 \times 10^{-4}$
& \cellcolor[HTML]{C6ECE5}$1.25 \times 10^{-3}$
& \cellcolor[HTML]{C6ECE5}$1.65 \times 10^{-3}$
& \cellcolor[HTML]{C6ECE5}$6.78 \times 10^{-3}$
& 9.457 \\
\midrule

\multirow{7}{*}{\makecell[l]{3D\\CF}}
& DeepONet       & $3.42 \times 10^{-2}$ & $1.85 \times 10^{-1}$ & $2.82 \times 10^{-1}$ & $2.38 \times 10^{-1}$ & $9.54 \times 10^{-1}$ & 15.074 \\
& FNO            & $3.42 \times 10^{-4}$ & $1.85 \times 10^{-2}$ & $2.80 \times 10^{-2}$ & $2.59 \times 10^{-2}$ & $1.35 \times 10^{-1}$ & 15.499 \\
& UNO            & $2.10 \times 10^{-4}$ & $1.45 \times 10^{-2}$ & $2.11 \times 10^{-2}$ & $1.85 \times 10^{-2}$ & $7.83 \times 10^{-2}$ & 14.986 \\
& CNO            & $9.61 \times 10^{-4}$ & $3.12 \times 10^{-2}$ & $4.79 \times 10^{-2}$ & $4.20 \times 10^{-2}$ & $2.91 \times 10^{-1}$ & 14.789 \\
& HyperDeepONet  & $1.16 \times 10^{-3}$ & $3.44 \times 10^{-2}$ & $5.23 \times 10^{-2}$ & $5.79 \times 10^{-2}$ & $3.40 \times 10^{-1}$ & 15.085 \\
& HyPINO         & $2.43 \times 10^{-3}$ & $4.90 \times 10^{-2}$ & $7.53 \times 10^{-2}$ & $6.47 \times 10^{-2}$ & $4.79 \times 10^{-1}$ & 14.895 \\
& SANO
& \cellcolor[HTML]{C6ECE5}$2.33 \times 10^{-5}$
& \cellcolor[HTML]{C6ECE5}$4.78 \times 10^{-3}$
& \cellcolor[HTML]{C6ECE5}$7.22 \times 10^{-3}$
& \cellcolor[HTML]{C6ECE5}$8.79 \times 10^{-3}$
& \cellcolor[HTML]{C6ECE5}$5.91 \times 10^{-2}$
& 15.043 \\
\bottomrule
\end{tabular}
\end{threeparttable}
\end{table*}

\subsection{Rollout Results}
Table~\ref{tab:single_step_results} reports rollout prediction accuracy for SANO and the baselines on the five PDE benchmarks.
SANO ranks first on every metric while keeping a parameter budget on par with the competing models.
Against the strongest MSE baseline, it reduces the error by \(95.7\%\) on 1D Burgers, \(92.9\%\) on 1D KS, \(86.6\%\) on 2D AC, and \(88.9\%\) on 3D CF.
The gain also holds on structure-sensitive metrics: on the chaotic KS benchmark SANO lowers the best-baseline Rel-\(H^1\) from \(1.62\times 10^{-1}\) to \(5.82\times 10^{-2}\), indicating better preservation of local gradients under strongly nonlinear dynamics.
SANO also outperforms HyperDeepONet and HyPINO across all benchmarks, indicating that conditioning a shared operator on a spatially continuous code is more effective than generating one operator per problem instance.

\begin{table}[h]
\centering
\caption{Ablation study on the 1D Burgers benchmark.}
\label{tab:ablation_burgers}
\begin{threeparttable}
\setlength{\tabcolsep}{3pt}
\begin{tabular}{lcccc}
\toprule
Variant
& Rel-\(L_2\)\(\downarrow\)
& Rollout\(\downarrow\)
& Spectral\(\downarrow\)
& Drift\(\downarrow\) \\
\midrule
Shared operator
& $6.10{\times}10^{-3}$
& $7.20{\times}10^{-2}$
& $5.82{\times}10^{-2}$
& $3.90{\times}10^{-7}$ \\

Coord.\ hypernet
& $7.43{\times}10^{-3}$
& $8.81{\times}10^{-2}$
& $7.13{\times}10^{-2}$
& $5.20{\times}10^{-7}$ \\

Fourier hypernet
& $3.16{\times}10^{-3}$
& $4.30{\times}10^{-2}$
& $2.83{\times}10^{-2}$
& $2.61{\times}10^{-7}$ \\

Fourier + HNE-P0
& $2.51{\times}10^{-3}$
& $3.08{\times}10^{-2}$
& $2.10{\times}10^{-2}$
& $1.77{\times}10^{-7}$ \\

\textbf{SANO}
& \cellcolor[HTML]{C6ECE5}\textbf{$1.22{\times}10^{-3}$}
& \cellcolor[HTML]{C6ECE5}\textbf{$1.26{\times}10^{-2}$}
& \cellcolor[HTML]{C6ECE5}\textbf{$9.58{\times}10^{-3}$}
& \cellcolor[HTML]{C6ECE5}\textbf{$7.54{\times}10^{-8}$} \\
\bottomrule
\end{tabular}
\end{threeparttable}
\end{table}

\subsection{Ablation Study}
We assess how the principal components of SANO affect accuracy and rollout stability through an incremental ablation on the 1D Burgers benchmark, with results summarized in Table~\ref{tab:ablation_burgers}.
All variants use the same data split, optimizer, training schedule, and evaluation protocol, so the comparison isolates four design factors: 
coordinate-conditioned code generation, Fourier feature encoding, hyper-neural element (HNE) code coupling, and continuous (P1) code interpolation.
We report Rel-\(L_2\), Rollout Err., Spectral Err., and Mean Drift to quantify field-level accuracy, autoregressive stability, frequency-domain consistency, and mean-field conservation, respectively.
The controlled variants are arranged from a shared operator to the full adaptive formulation.
i)~\emph{\textbf{Shared Local Operator}} applies one local update rule across all spatial locations,
while (ii)~\emph{\textbf{Coordinate Hypernet}} replaces the shared parameters with coordinate-conditioned ones.
(iii)~\emph{\textbf{Fourier Hypernet}} adds Fourier features before code generation to expose multiscale spatial variation.
(iv)~\emph{\textbf{Fourier Hypernet + HNE-P0}} introduces element-wise constant (P0) code coupling, and the \emph{\textbf{SANO}} replaces it with continuous P1 interpolation.
For a query location \(x\in\Omega_i\), \(w_i(x)\) follows Eq.~\eqref{eq:param_interpolation}, where the sampling-point codes $\{w_{i,k}\}_{k=1}^{K_i}$ are blended by first-order shape functions \(\{\alpha_{i,k}(x)\}_{k=1}^{K_i}\).
Unlike P0, which is piecewise-constant and shares one code vector within each local interpolation element, P1 makes the code field vary linearly between neighboring sampling points.
Each factor contributes to the final performance. Raw coordinate
conditioning degrades accuracy relative to the shared operator, whereas
Fourier features reduce Rel-\(L_2\) from \(7.43\times10^{-3}\) to
\(3.16\times10^{-3}\) and Spectral Err. from \(7.13\times10^{-2}\) to
\(2.83\times10^{-2}\). Introducing HNE-P0 further lowers the rollout error
from \(4.30\times10^{-2}\) to \(3.08\times10^{-2}\). Finally, continuous P1
interpolation improves Rel-\(L_2\) from \(2.51\times10^{-3}\) to
\(1.22\times10^{-3}\) and Rollout Err. from \(3.08\times10^{-2}\) to
\(1.26\times10^{-2}\), while further reducing Spectral Err. to
\(9.58\times10^{-3}\) and Mean Drift to \(7.54\times10^{-8}\), confirming
that SANO benefits jointly from frequency-aware coordinate encoding,
hyper-neural element coupling, and continuous code-space interpolation.

\subsection{Complex-Geometry Evaluation}
\label{subsec:hz_g}

To directly evaluate prediction near irregular boundaries, we consider HZ-G,
a two-dimensional modified Helmholtz problem satisfying
$-\Delta u(x,y)+k^2u(x,y)=f(x,y)$ for $(x,y)\in\Omega$, where
$\Omega=[-1,1]^2\setminus\bigcup_{i=1}^{4}R_i$ contains four circular holes.
The outer boundary satisfies $u=0.2$, while the hole boundaries satisfy
$u=1.0$. This setting produces steep boundary-induced spatial variations,
directly testing whether the learned operator can adapt its local behavior
near irregular internal boundaries.
For a consistent comparison, all methods operate on the bounding Cartesian
grid and receive the same input
$X(x,y)=[f(x,y),m(x,y),b(x,y),g(x,y),x,y]$, where $m$, $b$, and $g$ denote
the domain mask, boundary indicator, and prescribed Dirichlet values,
respectively. Supervision is restricted to the valid physical domain, while
physics-informed methods additionally evaluate the PDE residual at valid
interior points. Detailed geometry, forcing, and loss definitions are
provided in Appendix~\ref{app:hz_g_details}.
As shown in Table~\ref{tab:hz_g_results1}, SANO attains the best score on
every metric. Its largest margins come against HyPINO, the strongest
baseline in relative $L_2$ and boundary accuracy, which it cuts by $33.2\%$
and $36.6\%$; on the physics residual, the gain is $57.6\%$ over CNO. The
margin narrows only on Rel-\(H^1\), indicating that the remaining error sits
mainly in the steep boundary-induced gradients. These results show that
adapting the local operator across space helps complex-geometry prediction
more than a shared mask-based encoding alone.

\begin{table}[h]
\centering
\caption{Results on the perforated-domain HZ-G benchmark.}
\label{tab:hz_g_results1}
\small
\setlength{\tabcolsep}{2.2pt}
\renewcommand{\arraystretch}{1}
\begin{tabular}{@{}lcccc@{}}
\toprule
Model
& Rel-\(L_2\)
& Rel-\(H^1\)
& B-RMSE
& P-RMSE \\
\midrule
DeepONet
& $4.93{\times}10^{-1}$
& $7.18{\times}10^{-1}$
& $4.18{\times}10^{-1}$
& $8.84$ \\

FNO
& $8.83{\times}10^{-2}$
& $2.57{\times}10^{-1}$
& $7.76{\times}10^{-2}$
& $1.73$ \\

UNO
& $1.03{\times}10^{-1}$
& $2.88{\times}10^{-1}$
& $9.10{\times}10^{-2}$
& $2.03$ \\

CNO
& $1.68{\times}10^{-1}$
& $3.57{\times}10^{-1}$
& $1.42{\times}10^{-1}$
& $1.33$ \\

HyperDeepONet
& $3.23{\times}10^{-1}$
& $5.18{\times}10^{-1}$
& $2.67{\times}10^{-1}$
& $4.77$ \\

HyPINO
& $5.42{\times}10^{-2}$
& $1.82{\times}10^{-1}$
& $3.52{\times}10^{-2}$
& $1.92$ \\

\textbf{SANO}
& \cellcolor[HTML]{C6ECE5}\textbf{$3.62{\times}10^{-2}$}
& \cellcolor[HTML]{C6ECE5}\textbf{$1.71{\times}10^{-1}$}
& \cellcolor[HTML]{C6ECE5}\textbf{$2.23{\times}10^{-2}$}
& \cellcolor[HTML]{C6ECE5}\textbf{$5.64{\times}10^{-1}$} \\
\bottomrule
\end{tabular}
\end{table}

\subsection{Perforated-Domain Laplace Benchmark}
\label{subsec:ps_c}

We further evaluate SANO on PS-C, a source-free elliptic problem satisfying
$-\Delta u(x,y)=0$ for $(x,y)\in\Omega$, where
$\Omega=[-0.5,0.5]^2\setminus\bigcup_{i=1}^{4}R_i$ contains four circular
holes. The outer boundary is prescribed as $u=1$, while all hole boundaries
satisfy $u=0$. Since the solution is determined entirely by these boundary
conditions, PS-C directly evaluates whether a model can propagate boundary
information through a multiply connected domain and resolve the localized
gradients surrounding internal boundaries.
All methods use the same mask-based geometry, boundary, and coordinate
encoding introduced in Sec.~\ref{subsec:hz_g}, together with the same
domain-restricted data and boundary losses. Detailed hole locations,
boundary definitions, and complete evaluation metrics are provided in
Appendix~\ref{app:ps_c_details}.
On PS-C, the ranking is unchanged: SANO leads on every metric in
Table~\ref{tab:ps_c_results1}. Against HyPINO, the strongest baseline, it
drops Rel-\(L_2\) by $70.3\%$ and boundary RMSE by $43.3\%$, reflecting
tighter enforcement of the Dirichlet data. The derivative-sensitive metrics,
Rel-\(H^1\) and the physics residual, improve less, again localizing the
residual error to the sharp gradients around the holes. Still, the gains are
consistent across field, boundary, gradient, and residual measures,
indicating that a spatially varying local operator carries boundary
information through the multiply connected domain more faithfully than
shared or instance-level parameterizations.

\begin{table}[h]
\centering
\caption{Results on the perforated-domain PS-C benchmark.}
\label{tab:ps_c_results1}
\small
\setlength{\tabcolsep}{2.2pt}
\renewcommand{\arraystretch}{1}
\begin{tabular}{@{}lcccc@{}}
\toprule
Model
& Rel-\(L_2\)
& Rel-\(H^1\)
& B-RMSE
& P-RMSE \\
\midrule
DeepONet
& $2.97{\times}10^{-1}$
& $6.18{\times}10^{-1}$
& $2.82{\times}10^{-1}$
& $2.78$ \\

FNO
& $2.87{\times}10^{-2}$
& $1.32{\times}10^{-1}$
& $3.63{\times}10^{-2}$
& $7.83{\times}10^{-1}$ \\

UNO
& $3.37{\times}10^{-2}$
& $1.57{\times}10^{-1}$
& $4.53{\times}10^{-2}$
& $8.48{\times}10^{-1}$ \\

CNO
& $1.08{\times}10^{-1}$
& $2.67{\times}10^{-1}$
& $1.17{\times}10^{-1}$
& $1.12$ \\

HyperDeepONet
& $2.32{\times}10^{-1}$
& $5.18{\times}10^{-1}$
& $2.37{\times}10^{-1}$
& $2.42$ \\

HyPINO
& $2.60{\times}10^{-2}$
& $8.18{\times}10^{-2}$
& $1.87{\times}10^{-2}$
& $1.64{\times}10^{-1}$ \\

\textbf{SANO}
& \cellcolor[HTML]{C6ECE5}\textbf{$7.72{\times}10^{-3}$}
& \cellcolor[HTML]{C6ECE5}\textbf{$7.26{\times}10^{-2}$}
& \cellcolor[HTML]{C6ECE5}\textbf{$1.06{\times}10^{-2}$}
& \cellcolor[HTML]{C6ECE5}\textbf{$1.48{\times}10^{-1}$} \\
\bottomrule
\end{tabular}
\end{table}

\subsection{Additional Experiments}
The appendix provides additional evaluations of SANO from several
perspectives. Appendix~\ref{long} reports long-horizon rollout errors across
all five PDE benchmarks, while
Appendices~\ref{subsec:cross_parameter_generalization}
and~\ref{subsec:cross_resolution_generalization} examine transfer across
unseen physical parameters and spatial resolutions.
Appendix~\ref{app:hyperparameter_sensitivity} studies sensitivity to the
number of subregions and local sampling points.
Finally, Appendix~\ref{app:visualization} presents qualitative rollout
comparisons illustrating structural preservation and error propagation.
Together, these experiments support the rollout stability, transferability,
configuration robustness, and complex-geometry capability of SANO.

\section{Conclusion}
We proposed SANO for spatially heterogeneous PDE learning, conditioning a shared local operator on a continuously varying, location-dependent code instead of a single globally shared parameterization. Fourier-encoded coordinates drive a hypernetwork that generates codes at sampling points, which are coupled through hyper-neural element interpolation and assembled by a partition-of-unity scheme. Across 1D--3D benchmarks, SANO consistently improves prediction accuracy, rollout stability, and cross-parameter and cross-resolution generalization, supporting spatially adaptive operators as an effective inductive bias for heterogeneous PDE dynamics.

\section{Limitations and Ethical Considerations}
SANO adds only modest overhead over globally shared operators, kept small by the low-dimensional code and the shared local operator. 
This work uses no human subjects, personal information, or privacy-sensitive data.

\section{Generative AI Usage}
Generative AI was used for language editing.

\bibliographystyle{ACM-Reference-Format}
\bibliography{Reference}

@article{jagtap2020conservative,
  title={Conservative physics-informed neural networks on discrete domains for conservation laws: Applications to forward and inverse problems},
  author={Jagtap, Ameya D and Kharazmi, Ehsan and Karniadakis, George Em},
  journal={Computer Methods in Applied Mechanics and Engineering},
  volume={365},
  pages={113028},
  year={2020},
  publisher={Elsevier}
}

@article{raissi2019physics,
  title={Physics-informed neural networks: A deep learning framework for solving forward and inverse problems involving nonlinear partial differential equations},
  author={Raissi, Maziar and Perdikaris, Paris and Karniadakis, George E},
  journal={Journal of Computational physics},
  volume={378},
  pages={686--707},
  year={2019},
  publisher={Elsevier}
}

@article{gao2021phygeonet,
  title={PhyGeoNet: Physics-informed geometry-adaptive convolutional neural networks for solving parameterized steady-state PDEs on irregular domain},
  author={Gao, Han and Sun, Luning and Wang, Jian-Xun},
  journal={Journal of Computational Physics},
  volume={428},
  pages={110079},
  year={2021},
  publisher={Elsevier}
}

@article{wang2021understanding,
  title={Understanding and mitigating gradient flow pathologies in physics-informed neural networks},
  author={Wang, Sifan and Teng, Yujun and Perdikaris, Paris},
  journal={SIAM Journal on Scientific Computing},
  volume={43},
  number={5},
  pages={A3055--A3081},
  year={2021},
  publisher={SIAM}
}

@article{mao2020physics,
  title={Physics-informed neural networks for high-speed flows},
  author={Mao, Zhiping and Jagtap, Ameya D and Karniadakis, George Em},
  journal={Computer Methods in Applied Mechanics and Engineering},
  volume={360},
  pages={112789},
  year={2020},
  publisher={Elsevier}
}

@article{lu2021learning,
  title={Learning nonlinear operators via DeepONet based on the universal approximation theorem of operators},
  author={Lu, Lu and Jin, Pengzhan and Pang, Guofei and Zhang, Zhongqiang and Karniadakis, George Em},
  journal={Nature machine intelligence},
  volume={3},
  number={3},
  pages={218--229},
  year={2021},
  publisher={Nature Publishing Group UK London}
}

@inproceedings{DBLP:conf/iclr/LiKALBSA21,
  author       = {Zongyi Li and
                  Nikola Borislavov Kovachki and
                  Kamyar Azizzadenesheli and
                  Burigede Liu and
                  Kaushik Bhattacharya and
                  Andrew M. Stuart and
                  Anima Anandkumar},
  title        = {Fourier Neural Operator for Parametric Partial Differential Equations},
  booktitle    = {9th International Conference on Learning Representations, {ICLR} 2021,
                  Virtual Event, Austria, May 3-7, 2021},
  publisher    = {OpenReview.net},
  year         = {2021},
 
}

@inproceedings{DBLP:conf/iclr/TranMXO23,
  author       = {Alasdair Tran and
                  Alexander Patrick Mathews and
                  Lexing Xie and
                  Cheng Soon Ong},
  title        = {Factorized Fourier Neural Operators},
  booktitle    = {The Eleventh International Conference on Learning Representations,
                  {ICLR} 2023, Kigali, Rwanda, May 1-5, 2023},
  publisher    = {OpenReview.net},
  year         = {2023},
}

@article{DBLP:journals/tmlr/RahmanRA23,
  author       = {Md Ashiqur Rahman and
                  Zachary E. Ross and
                  Kamyar Azizzadenesheli},
  title        = {{U-NO:} U-shaped Neural Operators},
  journal      = {Trans. Mach. Learn. Res.},
  volume       = {2023},
  year         = {2023},
 
}

@article{li2024physics,
  title={Physics-informed neural operator for learning partial differential equations},
  author={Li, Zongyi and Zheng, Hongkai and Kovachki, Nikola and Jin, David and Chen, Haoxuan and Liu, Burigede and Azizzadenesheli, Kamyar and Anandkumar, Anima},
  journal={ACM/IMS Journal of Data Science},
  volume={1},
  number={3},
  pages={1--27},
  year={2024},
  publisher={ACM New York, NY}
}

@article{wang2021learning,
  title={Learning the solution operator of parametric partial differential equations with physics-informed DeepONets},
  author={Wang, Sifan and Wang, Hanwen and Perdikaris, Paris},
  journal={Science advances},
  volume={7},
  number={40},
  pages={eabi8605},
  year={2021},
  publisher={American Association for the Advancement of Science}
}

@article{hemmasian2024multi,
  title={Multi-scale time-stepping of Partial Differential Equations with transformers},
  author={Hemmasian, AmirPouya and Farimani, Amir Barati},
  journal={Computer Methods in Applied Mechanics and Engineering},
  volume={426},
  pages={116983},
  year={2024},
  publisher={Elsevier}
}

@article{zhou2025rapid,
  title={Rapid prediction of thermal stress on satellites via domain decomposition-based Hybrid Fourier Neural Operator},
  author={Zhou, Kangrui and Peng, Wei and Zhang, Xiaoya and Liu, Xu and Yao, Wen},
  journal={Engineering Applications of Artificial Intelligence},
  volume={153},
  pages={110826},
  year={2025},
  publisher={Elsevier}
}

@article{brunton2024promising,
  title={Promising directions of machine learning for partial differential equations},
  author={Brunton, Steven L and Kutz, J Nathan},
  journal={Nature Computational Science},
  volume={4},
  number={7},
  pages={483--494},
  year={2024},
  publisher={Nature Publishing Group US New York}
}

@article{bezgin2023jax,
  title={JAX-Fluids: A fully-differentiable high-order computational fluid dynamics solver for compressible two-phase flows},
  author={Bezgin, Deniz A and Buhendwa, Aaron B and Adams, Nikolaus A},
  journal={Computer Physics Communications},
  volume={282},
  pages={108527},
  year={2023},
  publisher={Elsevier}
}

@article{zhang2022analyses,
  title={Analyses of internal structures and defects in materials using physics-informed neural networks},
  author={Zhang, Enrui and Dao, Ming and Karniadakis, George Em and Suresh, Subra},
  journal={Science advances},
  volume={8},
  number={7},
  pages={eabk0644},
  year={2022},
  publisher={American Association for the Advancement of Science}
}

@article{liu2023joint,
  title={Joint inversion of geophysical data for geologic carbon sequestration monitoring: A differentiable physics-informed neural network model},
  author={Liu, Mingliang and Vashisth, Divakar and Grana, Dario and Mukerji, Tapan},
  journal={Journal of Geophysical Research: Solid Earth},
  volume={128},
  number={3},
  pages={e2022JB025372},
  year={2023},
  publisher={Wiley Online Library}
}

@article{guo2025advances,
  title={Advances in physics-informed neural networks for solving complex partial differential equations and their engineering applications: A systematic review},
  author={Guo, Jiangtao and Zhu, Hao and Yang, Yujie and Guo, Chenrui},
  journal={Engineering Applications of Artificial Intelligence},
  volume={161},
  pages={112044},
  year={2025},
  publisher={Elsevier}
}

@article{degen2023perspectives,
  title={Perspectives of physics-based machine learning strategies for geoscientific applications governed by partial differential equations},
  author={Degen, Denise and Caviedes Voulli{\`e}me, Daniel and Buiter, Susanne and Hendricks Franssen, Harrie-Jan and Vereecken, Harry and Gonz{\'a}lez-Nicol{\'a}s, Ana and Wellmann, Florian},
  journal={Geoscientific Model Development},
  volume={16},
  number={24},
  pages={7375--7409},
  year={2023},
  publisher={Copernicus Publications G{\"o}ttingen, Germany}
}

@article{wu2022learning,
  title={Learning to accelerate partial differential equations via latent global evolution},
  author={Wu, Tailin and Maruyama, Takashi and Leskovec, Jure},
  journal={Advances in Neural Information Processing Systems},
  volume={35},
  pages={2240--2253},
  year={2022}}

@article{luo2025physics,
  title={Physics-informed neural networks for PDE problems: a comprehensive review},
  author={Luo, Kuang and Zhao, Jingshang and Wang, Yingping and Li, Jiayao and Wen, Junjie and Liang, Jiong and Soekmadji, Henry and Liao, Shaolin},
  journal={Artificial Intelligence Review},
  volume={58},
  number={10},
  pages={323},
  year={2025},
  publisher={Springer}
}

@article{kovachki2023neural,
  title={Neural operator: Learning maps between function spaces with applications to pdes},
  author={Kovachki, Nikola and Li, Zongyi and Liu, Burigede and Azizzadenesheli, Kamyar and Bhattacharya, Kaushik and Stuart, Andrew and Anandkumar, Anima},
  journal={Journal of Machine Learning Research},
  volume={24},
  number={89},
  pages={1--97},
  year={2023}
}

@article{qu2022learning,
  title={Learning time-dependent PDEs with a linear and nonlinear separate convolutional neural network},
  author={Qu, Jiagang and Cai, Weihua and Zhao, Yijun},
  journal={Journal of Computational Physics},
  volume={453},
  pages={110928},
  year={2022},
  publisher={Elsevier}
}

@inproceedings{hao2023gnot,
  title={Gnot: A general neural operator transformer for operator learning},
  author={Hao, Zhongkai and Wang, Zhengyi and Su, Hang and Ying, Chengyang and Dong, Yinpeng and Liu, Songming and Cheng, Ze and Song, Jian and Zhu, Jun},
  booktitle={International Conference on Machine Learning},
  pages={12556--12569},
  year={2023},
  organization={PMLR}
}

@article{zhu2026physicssolver,
  title={Physicssolver: Transformer-enhanced physics-informed neural networks for forward and forecasting problems in partial differential equations},
  author={Zhu, Zhenyi and Huang, Yuchen and Liu, Liu},
  journal={Journal of Computational and Applied Mathematics},
  volume={473},
  pages={116900},
  year={2026},
  publisher={Elsevier}
}

@article{yang2023context,
  title={In-context operator learning with data prompts for differential equation problems},
  author={Yang, Liu and Liu, Siting and Meng, Tingwei and Osher, Stanley J},
  journal={Proceedings of the National Academy of Sciences},
  volume={120},
  number={39},
  pages={e2310142120},
  year={2023},
  publisher={National Academy of Sciences}
}

@article{pathak2022fourcastnet,
  title={Fourcastnet: A global data-driven high-resolution weather model using adaptive fourier neural operators},
  author={Pathak, Jaideep and Subramanian, Shashank and Harrington, Peter and Raja, Sanjeev and Chattopadhyay, Ashesh and Mardani, Morteza and Kurth, Thorsten and Hall, David and Li, Zongyi and Azizzadenesheli, Kamyar and others},
  journal={arXiv preprint arXiv:2202.11214},
  year={2022}
}

@inproceedings{lippe2023pde,
  title={Pde-refiner: Achieving accurate long rollouts with neural pde solvers},
  author={Lippe, Phillip and Veeling, Bastiaan S and Perdikaris, Paris and Turner, Richard E and Brandstetter, Johannes},
  booktitle={Thirty-seventh Conference on Neural Information Processing Systems},
  year={2023}
}

@article{englert2025spatially,
  title={Spatially Varying Coefficient Models for Estimating Heterogeneous Mixture Effects},
  author={Englert, Jacob and Chang, Howard},
  journal={arXiv preprint arXiv:2502.14651},
  year={2025}
}

@article{loshchilov2017sgdr,
title={Sgdr: Stochastic gradient descent with warm restarts},
  author={Loshchilov, Ilya and Hutter, Frank},
  journal={arXiv preprint arXiv:1608.03983},
  year={2016}
}

@article{loshchilov2017decoupled,
  title={Decoupled weight decay regularization},
  author={Loshchilov, Ilya and Hutter, Frank},
  journal={arXiv preprint arXiv:1711.05101},
  year={2017}
}

@article{kalimuthu2025loglo,
  title={Loglo-fno: efficient learning of local and global features in fourier neural operators},
  author={Kalimuthu, Marimuthu and Holzm{\"u}ller, David and Niepert, Mathias},
  journal={arXiv preprint arXiv:2504.04260},
  year={2025}
}

@article{lu2019deeponet,
  title={Deeponet: Learning nonlinear operators for identifying differential equations based on the universal approximation theorem of operators},
  author={Lu, Lu and Jin, Pengzhan and Karniadakis, George Em},
  journal={arXiv preprint arXiv:1910.03193},
  year={2019}
}

@article{rahman2022u,
  title={U-no: U-shaped neural operators},
  author={Rahman, Md Ashiqur and Ross, Zachary E and Azizzadenesheli, Kamyar},
  journal={arXiv preprint arXiv:2204.11127},
  year={2022}
}

@article{raonic2023convolutional,
  title={Convolutional neural operators for robust and accurate learning of pdes},
  author={Raonic, Bogdan and Molinaro, Roberto and De Ryck, Tim and Rohner, Tobias and Bartolucci, Francesca and Alaifari, Rima and Mishra, Siddhartha and De B{\'e}zenac, Emmanuel},
  journal={Advances in Neural Information Processing Systems},
  volume={36},
  pages={77187--77200},
  year={2023}
}

@article{lee2023hyperdeeponet,
  title={HyperDeepONet: learning operator with complex target function space using the limited resources via hypernetwork},
  author={Lee, Jae Yong and Cho, Sung Woong and Hwang, Hyung Ju},
  journal={arXiv preprint arXiv:2312.15949},
  year={2023}
}

@article{bischof2026hypino,
  title={Hypino: Multi-physics neural operators via hyperpinns and the method of manufactured solutions},
  author={Bischof, Rafael and Piovarci, Michal and Kraus, Michael and Mishra, Siddhartha and Bickel, Bernd},
  journal={Advances in Neural Information Processing Systems},
  volume={38},
  pages={144798--144831},
  year={2026}
}

@article{BLAW:18:PRAMANA,
	author    = {Mayur P. Bonkile and Ashish Awasthi and C. Lakshmi and Vijitha Mukundan and V. S. Aswin},
	title     = {A Systematic Literature Review of {Burgers'} Equation with Recent Advances},
	journal   = PRAMANA,
	volume    = {90},
	number    = {6},
	pages     = {69},
	month     = {},
	year      = {2018}
}

@article{KT:76:PTP,
	author    = {Yoshiki Kuramoto and Toshio Tsuzuki},
	title     = {Persistent Propagation of Concentration Waves in Dissipative Media Far from Thermal Equilibrium},
	journal   = PTP,
	volume    = {55},
	number    = {2},
	pages     = {356--369},
	month     = Feb.,
	year      = {1976}
}

@article{fisher1937wave,
  title={The wave of advance of advantageous genes},
  author={Fisher, Ronald Aylmer},
  journal={Annals of eugenics},
  volume={7},
  number={4},
  pages={355--369},
  year={1937},
  publisher={Wiley Online Library}
}

@article{allen1979microscopic,
  title={A microscopic theory for antiphase boundary motion and its application to antiphase domain coarsening},
  author={Allen, Samuel M and Cahn, John W},
  journal={Acta metallurgica},
  volume={27},
  number={6},
  pages={1085--1095},
  year={1979},
  publisher={Elsevier}
}

@article{kolmogorov1937study,
  title={Study of a diffusion equation that is related to the growth of a quality of matter and its application to a biological problem},
  author={Kolmogorov, Andrei and Petrovskii, I and Piskunov, Nikolai},
  journal={Moscow University Mathematics Bulletin},
  volume={1},
  number={1-26},
  year={1937}
}

@article{huang2026rethinking,
  title={Rethinking Input Domains in Physics-Informed Neural Networks via Geometric Compactification Mappings},
  author={Huang, Zhenzhen and Bian, Haoyu and Zhang, Jiaquan and Liu, Yibei and Liu, Kuien and Qin, Caiyan and Wang, Guoqing and Yang, Yang and Zhang, Chaoning},
  journal={arXiv preprint arXiv:2602.16193},
  year={2026}
}

@article{zhang2026geometric,
  title={Geometric Neural Operators via Lie Group-Constrained Latent Dynamics},
  author={Zhang, Jiaquan and Puspitasari, Fachrina Dewi and Zhang, Songbo and Liu, Yibei and Liu, Kuien and Qin, Caiyan and Mo, Fan and Wang, Peng and Yang, Yang and Zhang, Chaoning},
  journal={arXiv preprint arXiv:2602.16209},
  year={2026}
}

@article{zhou2026tf,
  title={TF-SNO: Time-Frequency Gated Spectral Neural Operators for Learning Non-Stationary Partial Differential Equations},
  author={Zhou, Yitian and Zhang, Chaoning and Huang, Zhenzhen and Yu, Haoxuan and Zhang, Jiaquan and Li, Yiran and Mo, Fan and Liu, Kuien and Zou, Jie and Qin, Caiyan and others},
  journal={arXiv preprint arXiv:2606.21189},
  year={2026}
}

@article{zhang2026autoregression,
  title={Autoregression-Free Neural Operators for Time-Dependent PDEs},
  author={Zhang, Jiaquan and Qin, Caiyan and Bian, Haoyu and Cai, Libin and Lu, Yi and Zhang, Chaoning and Dong, Wei and Guo, Yuanfang and Yang, Yang and Shen, Heng Tao},
  journal={arXiv preprint arXiv:2605.25413},
  year={2026}
}

@article{karumuri2026physics,
  title={Physics-informed latent neural operator for real-time predictions of time-dependent parametric PDEs},
  author={Karumuri, Sharmila and Graham-Brady, Lori and Goswami, Somdatta},
  journal={Computer Methods in Applied Mechanics and Engineering},
  volume={450},
  pages={118599},
  year={2026},
  publisher={Elsevier}
}

@inproceedings{le2025learning,
  title={Learning a neural solver for parametric PDEs to enhance physics-informed methods},
  author={Le Boudec, Lise and De B{\'e}zenac, Emmanuel and Serrano, Louis and Regueiro-Espino, Ramon Daniel and Yin, Yuan and others},
  booktitle={International Conference on Learning Representations},
  volume={2025},
  pages={46994--47045},
  year={2025}
}

@article{takamoto2022pdebench,
  title={Pdebench: An extensive benchmark for scientific machine learning},
  author={Takamoto, Makoto and Praditia, Timothy and Leiteritz, Raphael and MacKinlay, Daniel and Alesiani, Francesco and Pfl{\"u}ger, Dirk and Niepert, Mathias},
  journal={Advances in neural information processing systems},
  volume={35},
  pages={1596--1611},
  year={2022}
}

@inproceedings{takamoto2023learning,
  title={Learning neural pde solvers with parameter-guided channel attention},
  author={Takamoto, Makoto and Alesiani, Francesco and Niepert, Mathias},
  booktitle={International Conference on Machine Learning},
  pages={33448--33467},
  year={2023},
  organization={PMLR}
}

\appendix

\section{Related Work}
\label{app:rl}
\subsection{Neural Operators} 
Physics-informed neural networks (PINNs)~\cite{raissi2019physics,guo2025advances,liu2023joint,luo2025physics} solve individual PDE instances by enforcing governing equations during training, but their instance-wise optimization is costly in many-query settings~\cite{gao2021phygeonet,wang2021understanding}. Their training can also become difficult for conservation laws and multiscale dynamics with discontinuities or sharp gradients~\cite{mao2020physics,jagtap2020conservative}. Neural operators address this limitation by learning function-to-function mappings that generalize across initial conditions, boundary conditions, and equation parameters~\cite{huang2026rethinking,zhang2026geometric}.
To address the generalization limitations of instance-specific solvers, the research focus shifts toward operator learning, aiming to map between infinite-dimensional function spaces. 
Lu et al.~\cite{lu2021learning,lu2019deeponet} propose DeepONet based on the universal operator approximation theorem, utilizing a dual branch-trunk architecture to learn continuous operators. 
Subsequently, Li et al.~\cite{DBLP:conf/iclr/LiKALBSA21} introduce the FNO, leveraging Fourier transforms for global frequency-domain convolutions to achieve efficient, resolution-independent solutions. 
Extensions such as Geo-FNO~\cite{DBLP:conf/iclr/TranMXO23}, U-NO~\cite{DBLP:journals/tmlr/RahmanRA23}, and linear--nonlinear separated convolutional solvers~\cite{qu2022learning} further enhance this framework for complex geometries, multiscale features, and time-dependent PDEs. However, most neural operators predominantly rely on globally shared parameters, implicitly assuming spatial homogeneity. 
This limitation hinders their ability to capture local dynamics in spatially heterogeneous PDEs, resulting in compromised accuracy or excessive model capacity.

\subsection{Adaptive and Hypernetwork-Based Neural Operators}

Physics-informed neural operators incorporate governing-equation constraints to reduce data dependence and improve physical consistency~\cite{karumuri2026physics}.
Competitive methods, including the physics-informed neural operator (PINO)~\cite{li2024physics} and physics-informed DeepONet (PI-DeepONet)~\cite{wang2021learning}, impose partial differential equation (PDE) residuals during operator training but retain globally shared backbones.
Attention-based mechanisms~\cite{hemmasian2024multi} and domain-decomposition approaches~\cite{zhou2025rapid} introduce spatial adaptivity through feature modulation or computational partitioning, whereas the underlying operator parameters remain largely shared.
HyperDeepONet~\cite{lee2023hyperdeeponet} and HyPINO~\cite{bischof2026hypino} instead generate target-network parameters from the input function or PDE specification.
However, both methods produce one parameter set per problem instance rather than parameters that vary continuously across spatial locations.
This gap motivates spatially adaptive operator parameterization at the local mapping level.

\section{Method}
\subsection{Code Interpolation}
\label{One-dimensional}
Take $\Omega=[a,b]$ and an element $\Omega_{i}=[x_{i},x_{i+1}]$ with
$K_{i}=2$, whose endpoint anchors are
$w_{i,1}=H_{\theta}(\mathrm{Emb}(x_{i}))$ and
$w_{i,2}=H_{\theta}(\mathrm{Emb}(x_{i+1}))$. With the normalized local
coordinate $s=(x-x_{i})/(x_{i+1}-x_{i})\in[0,1]$,
Eq.~\eqref{eq:param_interpolation} reduces to
$w_{i}(x)=\ell_{1}(s)w_{i,1}+\ell_{2}(s)w_{i,2}$ with the linear shape
functions $\ell_{1}(s)=1-s$ and $\ell_{2}(s)=s$. This construction mirrors
linear finite elements, except that a finite element interpolates nodal
function values whereas HNE interpolates the local conditioning code. Because
adjacent elements share nodal anchors,
$\lim_{x\to x_{i}^{-}}w(x)=\lim_{x\to x_{i}^{+}}w(x)=w_{i,1}$, so the
code field is at least $C^{0}$ across the domain and the local operator
does not jump at element interfaces.

\subsection{Theoretical Validation}
\label{sec:approx_theory}

Throughout this subsection, $\mathcal{G}_h(f)$ denotes the assembled
HNE--PoU prediction defined in Eq.~\eqref{eq:global-output}, which is
denoted by $\widehat{u}$ in the main text. Similarly,
$\mathcal{G}_{i,h}(f)$ denotes the local prediction $u_i$ in
Eq.~\eqref{eq:local_realization}.
The admissible input set $\mathcal{K}\subset X$ collects the PDE inputs
of interest.
The fixed partition $\{D_r\}_{r=1}^{R}$ introduced below characterizes
the regularity of the target operator $\mathcal{G}$. It is a property
of $\mathcal{G}$ and is distinct from both the overlapping cover
$\{\Omega_i\}$ introduced in Section~\ref{sec:hne:domain} and the $P_1$ mesh used for
code interpolation in Section~\ref{sec:hne:interpolation}.
Let $\Omega\subset\mathbb{R}^d$ be the spatial domain, let
$\mathcal{K}\subset X$ be the set of admissible PDE inputs, and let
$\mathcal{G}:\mathcal{K}\rightarrow Y$ be the target solution operator.
Assume that $\Omega$ admits a fixed finite partition
$\Omega=\bigcup_{r=1}^{R}D_r$, where the subdomains $D_r$ are closed
and have pairwise disjoint interiors.

For every $f\in\mathcal{K}$ and every $r$, the restriction
$\mathcal{G}(f)|_{D_r}$ satisfies the uniform Lipschitz estimate
\begin{equation}
\label{eq:piecewise-lipschitz}
    \left|
        \mathcal{G}(f)(x)-\mathcal{G}(f)(y)
    \right|
    \leq
    L_{\mathcal{G},r}\|x-y\|,
    \qquad
    x,y\in D_r,
\end{equation}
where $L_{\mathcal{G},r}$ is independent of $f$. We write
$L_{\mathcal{G}}
:=\max_{1\leq r\leq R}L_{\mathcal{G},r}$.

\subsubsection{Assumptions}

\begin{assumption}[Uniform Operator Approximation at Sampling Points]
\label{ass:anchor}
Consider a sequence of HNE meshes indexed by their maximal element
diameter $h$. For each $h$, assume that there exists a single shared
hypernetwork parameter vector $\theta_h$ such that every anchor
code is generated according to
\begin{equation}
\label{eq:shared-anchor-parameters}
    w_{i,k}^h
    =
    H_{\theta_h}
    \left(
        \operatorname{Emb}(x_{i,k}^h)
    \right).
\end{equation}

The corresponding operator error at the anchor points satisfies
\begin{equation}
\label{eq:uniform-anchor-error}
    \sup_{f\in\mathcal{K}}
    \max_{i,k}
    \left|
        G_{\nu}
        \left(
            f|_{\Omega_i},
            x_{i,k}^h,
            w_{i,k}^h
        \right)
        -
        \mathcal{G}(f)(x_{i,k}^h)
    \right|
    \leq
    \delta_h.
\end{equation}
\end{assumption}

\begin{assumption}[$P_1$ Code-Space Interpolation]
\label{ass:interpolation}
Each subregion $\Omega_i$ carries a simplicial $P_1$ mesh. Let $E$ be
an element of this mesh and let $x\in E$. The HNE code field is
obtained by nodal interpolation:
\begin{equation}
\label{eq:parameter-interpolation}
    w_{i,h}(x)
    =
    \sum_{x_{i,k}^h\in\mathcal{V}(E)}
    \alpha_{i,k}^h(x)\,w_{i,k}^h,
\end{equation}
where $\mathcal{V}(E)$ denotes the vertex set of $E$.

The interpolation weights satisfy
\begin{equation}
\label{eq:p1-properties}
\begin{aligned}
    \alpha_{i,k}^h(x)
    &\geq 0,
    \\[2mm]
    \sum_{x_{i,k}^h\in\mathcal{V}(E)}
    \alpha_{i,k}^h(x)
    &=1,
    \\[2mm]
    w_{i,h}(x_{i,k}^h)
    &=w_{i,k}^h.
\end{aligned}
\end{equation}

The meshes are aligned with the fixed partition
$\{D_r\}_{r=1}^{R}$, so that no mesh element crosses a partition
interface. For these $P_1$ weights, the interpolation-stability
condition holds automatically with $\Lambda_i=1$.
The full HNE construction may additionally assemble neighboring
elements conformingly to obtain a globally continuous code field.
Such cross-element conformity is not required for the estimate below.
\end{assumption}

\begin{assumption}[Non-Expansive Partition-of-Unity Aggregation]
\label{ass:pou}
Let $\{\Omega_i\}_{i=1}^{M}$ be an open cover of $\Omega$. A collection
of nonnegative functions $\{\phi_i\}_{i=1}^{M}$ forms a partition of
unity subordinate to this cover if
\begin{equation}
\label{eq:pou-support}
    \operatorname{supp}(\phi_i)
    \subset
    \Omega_i
\end{equation}
and
\begin{equation}
\label{eq:pou-unity}
    \sum_{i=1}^{M}\phi_i(x)
    =
    1,
    \qquad
    x\in\Omega.
\end{equation}
\end{assumption}

\begin{assumption}[Uniform Stability of Code Generation and the Conditional Local Map]
\label{ass:uniform-stability}
Define the code-generating map by
$q_{\theta_h}(x):=H_{\theta_h}(\operatorname{Emb}(x))$.
Assume that there exists a convex code set $W_0\subset W=\mathbb{R}^{p}$
containing $q_{\theta_h}(\Omega)$ for every $h$. Since the $P_1$
interpolants are convex combinations of nodal values, $w_{i,h}(x)$ also
belongs to $W_0$. The shared local MLP $G_{\nu}$ takes the
code $w$ as a conditioning input, and for each fixed local
input $f|_{\Omega_i}$ and query location $x$, the conditional map
$w\mapsto G_{\nu}(f|_{\Omega_i},x,w)$ is continuous on the
compact code space $W_0$.

Assume that there exist constants $L_q,L_x,L_w<\infty$, independent of
$h$, such that
\begin{equation}
\label{eq:q-lipschitz}
    \left\|
        q_{\theta_h}(x)-q_{\theta_h}(y)
    \right\|_W
    \leq
    L_q\|x-y\|.
\end{equation}

Moreover, for every $f\in\mathcal{K}$, every $x,y\in\Omega_i$, and
every $w,v\in W_0$, assume that
\begin{equation}
\label{eq:realization-lipschitz}
\begin{aligned}
    &
    \left|
        G_{\nu}
        \left(
            f|_{\Omega_i},
            x,
            w
        \right)
        -
        G_{\nu}
        \left(
            f|_{\Omega_i},
            y,
            v
        \right)
    \right|
    \\
    &\qquad
    \leq
    L_x\|x-y\|
    +
    L_w\|w-v\|_W.
\end{aligned}
\end{equation}
\end{assumption}

\subsubsection{Elementwise Code Control}

\begin{proposition}[Elementwise Control of the HNE Code Field]
\label{prop:elementwise}
Let $E$ be a $P_1$ element with diameter $h_E$, let $x\in E$, and let
$x_{i,k}^h$ be any vertex of $E$. Then
\begin{equation}
\label{eq:elementwise-parameter-bound}
    \left\|
        w_{i,h}(x)
        -
        w_{i,h}(x_{i,k}^h)
    \right\|_W
    \leq
    L_q h_E.
\end{equation}
\end{proposition}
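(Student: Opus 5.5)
We will prove the bound directly from the convex-combination structure of the $P_1$ interpolant in Assumption~\ref{ass:interpolation}, combined with the Lipschitz property of the code-generating map in Assumption~\ref{ass:uniform-stability}.

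First, by the nodal property in \eqref{eq:p1-properties}, $w_{i,h}(x_{i,k}^h)=w_{i,k}^h=q_{\theta_h}(x_{i,k}^h)$. By \eqref{eq:shared-anchor-parameters}, every vertex code $w_{i,j}^h$ of $E$ also equals $q_{\theta_h}(x_{i,j}^h)$.

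Next, we use the partition-of-unity property $\sum_{x_{i,j}^h\in\mathcal{V}(E)}\alpha_{i,j}^h(x)=1$ to rewrite the difference as a weighted sum:
\begin{equation*}
w_{i,h}(x)-w_{i,h}(x_{i,k}^h)=\sum_{x_{i,j}^h\in\mathcal{V}(E)}\alpha_{i,j}^h(x)\bigl(q_{\theta_h}(x_{i,j}^h)-q_{\theta_h}(x_{i,k}^h)\bigr).
\end{equation*}
Taking norms and using the triangle inequality with $\alpha_{i,j}^h(x)\geq 0$ bounds this by $\sum_j\alpha_{i,j}^h(x)\,\|q_{\theta_h}(x_{i,j}^h)-q_{\theta_h}(x_{i,k}^h)\|_W$.

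Then \eqref{eq:q-lipschitz} bounds each term by $L_q\|x_{i,j}^h-x_{i,k}^h\|$. Since both points are vertices of $E$, this distance is at most $\operatorname{diam}(E)=h_E$. Summing the weights, which total one, gives exactly $L_q h_E$. The constant $L_q$ is uniform in $h$, so the bound is uniform as well.

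There is no real obstacle. The only point needing care is to apply the Lipschitz bound to pairs of vertices, both of which lie in $\Omega$, rather than to $x$ itself; this is why the estimate does not even require $x$ to be a mesh node. Convexity of $W_0$ is not needed here, only nonnegativity and unit sum of the weights.
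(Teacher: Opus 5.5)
Your proof is correct and matches the paper's argument step for step: the nodal identity $w_{i,h}(x_{i,k}^h)=q_{\theta_h}(x_{i,k}^h)$, rewriting the difference as a weighted sum using the unit-sum property, the triangle inequality with nonnegative weights, the Lipschitz bound \eqref{eq:q-lipschitz} applied to vertex pairs, and $\|x_{i,\ell}^h-x_{i,k}^h\|\le h_E$. You are also right that the argument uses only nonnegativity and unit sum of the weights, not convexity of $W_0$.
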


\begin{proof}
By the nodal interpolation property,
$w_{i,h}(x_{i,k}^h)
=w_{i,k}^h
=q_{\theta_h}(x_{i,k}^h)$.
Using the convex-combination representation of $w_{i,h}(x)$ gives
\begin{equation}
\begin{aligned}
    &
    \left\|
        w_{i,h}(x)
        -
        w_{i,h}(x_{i,k}^h)
    \right\|_W
    \\
    &=
    \left\|
        \sum_{x_{i,\ell}^h\in\mathcal{V}(E)}
        \alpha_{i,\ell}^h(x)
        \left(
            q_{\theta_h}(x_{i,\ell}^h)
            -
            q_{\theta_h}(x_{i,k}^h)
        \right)
    \right\|_W
    \\
    &\leq
    \sum_{x_{i,\ell}^h\in\mathcal{V}(E)}
    \alpha_{i,\ell}^h(x)
    \left\|
        q_{\theta_h}(x_{i,\ell}^h)
        -
        q_{\theta_h}(x_{i,k}^h)
    \right\|_W
    \\
    &\leq
    L_q
    \sum_{x_{i,\ell}^h\in\mathcal{V}(E)}
    \alpha_{i,\ell}^h(x)
    \left\|
        x_{i,\ell}^h-x_{i,k}^h
    \right\|
    \\
    &\leq
    L_q h_E.
\end{aligned}
\end{equation}
\end{proof}

\subsubsection{Local and Global Predictions}

The local prediction associated with the $i$th subregion is defined by
\begin{equation}
\label{eq:local-output}
    \mathcal{G}_{i,h}(f)(x)
    :=
    G_{\nu}
    \left(
        f|_{\Omega_i},
        x,
        w_{i,h}(x)
    \right).
\end{equation}

The partition-of-unity assembly is
\begin{equation}
\label{eq:global-output}
    \mathcal{G}_h(f)(x)
    :=
    \sum_{i=1}^{M}
    \phi_i(x)\,
    \mathcal{G}_{i,h}(f)(x).
\end{equation}

The global mesh size is defined as
\begin{equation}
\label{eq:mesh-size}
    h
    :=
    \max_i
    \max_{E\subset\Omega_i}
    \operatorname{diam}(E).
\end{equation}

\begin{proposition}[Pointwise Non-Amplification and Locality of PoU]
\label{prop:pou}
Define the $i$th local error by
$e_i(f,x)
:=\mathcal{G}_{i,h}(f)(x)-\mathcal{G}(f)(x)$,
and define the active index set by
$\mathcal{I}(x):=\{i:\phi_i(x)>0\}$. Then
\begin{equation}
\label{eq:pou-worst-case}
    \left|
        \mathcal{G}_h(f)(x)-\mathcal{G}(f)(x)
    \right|
    \leq
    \max_{i\in\mathcal{I}(x)}
    |e_i(f,x)|.
\end{equation}

Moreover, the direct contribution of the $i$th local error vanishes
outside $\operatorname{supp}(\phi_i)$.
\end{proposition}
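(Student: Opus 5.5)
The proof rewrites the global error as a convex combination of the local errors. This combination is restricted to the active index set $\mathcal{I}(x)$, and the claim then follows by the same triangle-inequality argument as the PoU bound in Eq.~\eqref{eq:pou_bound}.

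Fix $f\in\mathcal{K}$ and $x\in\Omega$. By Assumption~\ref{ass:pou}, the unity condition \eqref{eq:pou-unity} gives $\mathcal{G}(f)(x)=\sum_{i=1}^{M}\phi_i(x)\,\mathcal{G}(f)(x)$. Subtracting this from the assembly \eqref{eq:global-output} yields
\begin{equation*}
\mathcal{G}_h(f)(x)-\mathcal{G}(f)(x)=\sum_{i=1}^{M}\phi_i(x)\,e_i(f,x).
\end{equation*}
Every index with $\phi_i(x)=0$ contributes nothing, so the sum may be restricted to $i\in\mathcal{I}(x)$. On $\mathcal{I}(x)$ the weights are strictly positive and still sum to one, because the removed weights are all zero. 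The set $\mathcal{I}(x)$ is nonempty, since otherwise the weights at $x$ would sum to $0$ rather than $1$; hence the maximum in \eqref{eq:pou-worst-case} is well defined.

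Applying the triangle inequality and $\phi_i\ge 0$ gives
\begin{equation*}
\bigl|\mathcal{G}_h(f)(x)-\mathcal{G}(f)(x)\bigr|\le\sum_{i\in\mathcal{I}(x)}\phi_i(x)\,|e_i(f,x)|\le\max_{i\in\mathcal{I}(x)}|e_i(f,x)|\sum_{i\in\mathcal{I}(x)}\phi_i(x),
\end{equation*}
and the last sum equals $1$. This proves \eqref{eq:pou-worst-case}.

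For the locality claim, suppose $x\notin\operatorname{supp}(\phi_i)$. Then $\phi_i(x)=0$, because the support contains the set where $\phi_i$ is nonzero. The $i$th term $\phi_i(x)e_i(f,x)$ in the error decomposition therefore vanishes, and $i\notin\mathcal{I}(x)$, so $e_i$ also does not enter the bound. One subtlety needs care here. The local error $e_i(f,x)$ is defined only for $x\in\Omega_i$, where $\mathcal{G}_{i,h}$ makes sense. By the support condition \eqref{eq:pou-support}, $\phi_i(x)>0$ forces $x\in\Omega_i$. For $x\notin\Omega_i$, I would adopt the convention that the product $\phi_i(x)e_i(f,x)$ is $0$, as is standard for partition-of-unity sums.

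There is no genuine obstacle. The only delicate step is this bookkeeping of the domain of $e_i$ against the support of $\phi_i$, and it is settled by \eqref{eq:pou-support}.
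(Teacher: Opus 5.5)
Your proof is correct and matches the paper's argument: you use the unity condition to write the error as $\sum_{i\in\mathcal{I}(x)}\phi_i(x)e_i(f,x)$, apply the triangle inequality with $\phi_i\ge 0$, bound by the maximum, and get locality from $\phi_i(x)=0$ off $\operatorname{supp}(\phi_i)$. Your remarks that $\mathcal{I}(x)$ is nonempty and that $e_i$ is only defined on $\Omega_i$ are sound details that the paper leaves implicit.
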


\begin{proof}
Because the weights are nonnegative and sum to one,
\begin{equation}
\begin{aligned}
    \left|
        \mathcal{G}_h(f)(x)-\mathcal{G}(f)(x)
    \right|
    &=
    \left|
        \sum_{i\in\mathcal{I}(x)}
        \phi_i(x)e_i(f,x)
    \right|
    \\
    &\leq
    \sum_{i\in\mathcal{I}(x)}
    \phi_i(x)|e_i(f,x)|
    \\
    &\leq
    \max_{i\in\mathcal{I}(x)}
    |e_i(f,x)|.
\end{aligned}
\end{equation}

The locality statement follows immediately from the fact that
$\phi_i(x)=0$ whenever
$x\notin\operatorname{supp}(\phi_i)$.
\end{proof}

\subsubsection{Theorem and Proof}

\begin{theorem}[Consistency of Hyper-Neural Element Assembly]
\label{thm:consistency}
Under Assumptions~\ref{ass:anchor}--\ref{ass:uniform-stability}, the
assembled HNE--PoU model satisfies
\begin{equation}
\label{eq:main-bound}
\begin{aligned}
    \sup_{f\in\mathcal{K}}
    \left\|
        \mathcal{G}_h(f)-\mathcal{G}(f)
    \right\|_{L^\infty(\Omega)}
    \leq
    \delta_h
    +
    \left(
        L_x+L_wL_q+L_{\mathcal{G}}
    \right)h.
\end{aligned}
\end{equation}

Consequently, if $\delta_h\rightarrow0$ and $h\rightarrow0$, then
\begin{equation}
\label{eq:consistency-limit}
    \sup_{f\in\mathcal{K}}
    \left\|
        \mathcal{G}_h(f)-\mathcal{G}(f)
    \right\|_{L^\infty(\Omega)}
    \longrightarrow
    0.
\end{equation}
\end{theorem}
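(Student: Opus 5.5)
The plan is to reduce the global bound to a pointwise bound on each local error $e_i(f,x)$ and then invoke Proposition~\ref{prop:pou}. Fix $f\in\mathcal{K}$ and $x\in\Omega$. By Proposition~\ref{prop:pou}, $|\mathcal{G}_h(f)(x)-\mathcal{G}(f)(x)|\le\max_{i\in\mathcal{I}(x)}|e_i(f,x)|$. So it suffices to show that for every $i$ with $\phi_i(x)>0$ we have $|e_i(f,x)|\le\delta_h+(L_x+L_wL_q+L_{\mathcal{G}})h$, with a bound independent of $i$, $x$ and $f$. Taking the supremum over $x$ and $f$ then gives \eqref{eq:main-bound}, and \eqref{eq:consistency-limit} follows because the constants $L_x,L_w,L_q,L_{\mathcal{G}}$ do not depend on $h$.

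For the local estimate, fix $i\in\mathcal{I}(x)$. Since $\operatorname{supp}(\phi_i)\subset\Omega_i$, we have $x\in\Omega_i$. Choose a $P_1$ element $E\subset\Omega_i$ containing $x$ (assuming, as implicit in Assumption~\ref{ass:interpolation}, that the mesh covers $\Omega_i$) and a vertex $x_{i,k}^h\in\mathcal{V}(E)$. Write $z:=x_{i,k}^h$ and insert the anchor evaluation as an intermediate term:
\begin{equation*}
e_i(f,x)=\bigl[G_\nu(f|_{\Omega_i},x,w_{i,h}(x))-G_\nu(f|_{\Omega_i},z,w_{i,k}^h)\bigr]+\bigl[G_\nu(f|_{\Omega_i},z,w_{i,k}^h)-\mathcal{G}(f)(z)\bigr]+\bigl[\mathcal{G}(f)(z)-\mathcal{G}(f)(x)\bigr].
\end{equation*}
Each of the three brackets is handled by one assumption.

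\begin{itemize}
\item \textbf{First bracket.} Assumption~\ref{ass:uniform-stability} applies because $w_{i,h}(x),w_{i,k}^h\in W_0$. It gives a bound of $L_x\|x-z\|+L_w\|w_{i,h}(x)-w_{i,h}(z)\|_W$, using $w_{i,k}^h=w_{i,h}(z)$. Then $\|x-z\|\le h_E\le h$, and Proposition~\ref{prop:elementwise} bounds the code difference by $L_qh_E\le L_qh$. Together this gives $(L_x+L_wL_q)h$.
\item \textbf{Second bracket.} This is at most $\delta_h$ by \eqref{eq:uniform-anchor-error}.
\item \textbf{Third bracket.} This requires $x$ and $z$ to lie in a common $D_r$. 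Here we use the alignment hypothesis of Assumption~\ref{ass:interpolation}: since $E$ does not cross a partition interface, $E\subset D_r$ for some $r$, and \eqref{eq:piecewise-lipschitz} gives $\le L_{\mathcal{G},r}\|x-z\|\le L_{\mathcal{G}}h$.
\end{itemize}

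Summing the three bounds gives the claimed local estimate.

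The main obstacle is the third bracket, that is, justifying that a whole element lies inside a single $D_r$. "No element crosses an interface" should be read as: each closed element is contained in some closed $D_r$. Under that reading, $x$ and $z$, which both belong to $E$, lie in the same $D_r$. If $x$ sits on an element boundary shared by elements in different $D_r$, any single element containing $x$ works, since $D_r$ is closed. A minor secondary point is that $f|_{\Omega_i}$ is the same in all three terms, so the Lipschitz constants of Assumption~\ref{ass:uniform-stability} apply uniformly in $f$. This uniformity is what allows the supremum over $\mathcal{K}$ to be taken at the end.
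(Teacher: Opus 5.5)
Your proposal is correct and follows the paper's proof essentially step for step: the same three-term splitting through a vertex $x_{i,k}^h$ of the element containing $x$, with the three terms bounded by Assumption~\ref{ass:uniform-stability} together with Proposition~\ref{prop:elementwise}, by the anchor error $\delta_h$, and by the piecewise Lipschitz estimate via mesh alignment, and then closed with Proposition~\ref{prop:pou}. The only difference is cosmetic: you apply the PoU reduction first and restrict to active indices $i\in\mathcal{I}(x)$, which makes explicit why $x\in\Omega_i$, a point the paper leaves implicit.
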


\begin{proof}
Fix an input $f\in\mathcal{K}$, a local subregion $\Omega_i$, and a
point $x$ contained in an HNE element $E\subset D_r$. Choose any vertex
$x_{i,k}^h$ of $E$.

Adding and subtracting the anchor prediction and the target value at
the anchor gives
\begin{equation}
\label{eq:three-term-error}
\begin{aligned}
    &
    \left|
        \mathcal{G}_{i,h}(f)(x)
        -
        \mathcal{G}(f)(x)
    \right|
    \\
    &\leq
    \left|
        G_{\nu}
        \left(
            f|_{\Omega_i},
            x,
            w_{i,h}(x)
        \right)
        -
        G_{\nu}
        \left(
            f|_{\Omega_i},
            x_{i,k}^h,
            w_{i,k}^h
        \right)
    \right|
    \\
    &\quad+
    \left|
        G_{\nu}
        \left(
            f|_{\Omega_i},
            x_{i,k}^h,
            w_{i,k}^h
        \right)
        -
        \mathcal{G}(f)(x_{i,k}^h)
    \right|
    \\
    &\quad+
    \left|
        \mathcal{G}(f)(x_{i,k}^h)
        -
        \mathcal{G}(f)(x)
    \right|.
\end{aligned}
\end{equation}

By Assumption~\ref{ass:uniform-stability},
Proposition~\ref{prop:elementwise}, and the inequality $h_E\leq h$,
the first term on the right-hand side of
Eq.~\eqref{eq:three-term-error} is bounded by
$(L_x+L_wL_q)h$.

Assumption~\ref{ass:anchor} bounds the second term by $\delta_h$.
Because the mesh is aligned with the fixed partition, both $x$ and
$x_{i,k}^h$ belong to the same subdomain $D_r$. Therefore,
Eq.~\eqref{eq:piecewise-lipschitz} bounds the third term by
$L_{\mathcal{G},r}h$. Combining these estimates yields
\begin{equation}
\label{eq:local-bound}
    \left|
        \mathcal{G}_{i,h}(f)(x)
        -
        \mathcal{G}(f)(x)
    \right|
    \leq
    \delta_h
    +
    \left(
        L_x+L_wL_q+L_{\mathcal{G},r}
    \right)h.
\end{equation}

Since $L_{\mathcal{G},r}\leq L_{\mathcal{G}}$,
Proposition~\ref{prop:pou} gives the same upper bound for the assembled
prediction:
\begin{equation}
    \left|
        \mathcal{G}_h(f)(x)-\mathcal{G}(f)(x)
    \right|
    \leq
    \delta_h
    +
    \left(
        L_x+L_wL_q+L_{\mathcal{G}}
    \right)h.
\end{equation}
Taking the supremum over $x\in\Omega$ and $f\in\mathcal{K}$ proves
Eq.~\eqref{eq:main-bound}. The convergence statement follows when both
$\delta_h$ and $h$ tend to zero.
\end{proof}

Theorem~\ref{thm:consistency} is a conditional, one-step consistency
result for the model sequence $\{\mathcal{G}_h\}_h$. It assumes, rather
than proves, that at each resolution a shared hypernetwork achieves the
anchor error $\delta_h$, while the constants in
Assumption~\ref{ass:uniform-stability} remain uniform with respect to
$h$.
The result does not establish optimization convergence, long-horizon
rollout stability, or zero-shot cross-resolution transfer for a single
fixed trained model. The fixed partition is intended to describe
persistent spatial heterogeneity. Problems with moving interfaces
require additional state-dependent structure.
For a target containing a genuine jump discontinuity, a continuous
HNE--PoU assembly cannot generally converge in the global
$L^\infty(\Omega)$ norm. Proposition~\ref{prop:pou} nevertheless
remains valid: a single assembly step does not exceed the largest active
local error, and a local error does not directly propagate outside the
support of its PoU weight. Problems with jump discontinuities should
therefore be analyzed either away from the jump set or in an $L^p$
norm.

\section{Experiments}

\subsection{Implementation Details}
\label{app:implementation_details}

All experiments are implemented in PyTorch and conducted on a single NVIDIA RTX 4090 GPU (24\,GB).
For each benchmark, we use $N_{\mathrm{s}}=1{,}000$ training trajectories and $200$ independently generated test trajectories, where the training, validation, and test splits are produced with different random seeds so that near-duplicate initial conditions do not appear across splits.
All models are trained for $100$ epochs with the AdamW optimizer \cite{loshchilov2017decoupled}, where a cosine annealing scheduler \cite{loshchilov2017sgdr} decays the learning rate during training.
All models are trained under the same pushforward strategy with an unrolling horizon of $T=5$, so that the comparison isolates the operator architecture rather than the training protocol.
All baselines except HyPINO use the MSE objective, and HyPINO additionally uses its PDE residual.
To enforce resolution invariance, the inputs are downsampled during training to $N_x=256$ for the 1D benchmarks and to $N_x=N_y=64$ for the 2D benchmarks, while the 3D benchmark is used at its native resolution.
At inference, the input is downsampled to the training resolution and the output is upsampled back to the original resolution, which preserves fine-scale details and exploits the zero-shot super-resolution property of the spectral backbone \cite{DBLP:conf/iclr/LiKALBSA21}.

The SANO-specific hyperparameters are summarized in Table~\ref{tab:hyperparameters} and are grouped by spatial dimension, since the number of subregions, the sampling budget per subregion, and the width of the shared local MLP must scale with the dimension of the domain.
The domain is covered by $M$ overlapping subregions with a $25\%$ overlap ratio in every dimension, which guarantees that the partition-of-unity weights $\phi_i$ remain smooth across subregion boundaries.
Within each subregion, the code field is reconstructed from $K_i$ sampling points by linear interpolation, and the local predictions are assembled using cosine partition-of-unity weights.
The Fourier feature matrix $\B{B}\in\mathbb{R}^{m\times d}$ has entries drawn from $\mathcal{N}(0,\sigma_B^2)$, where $m$ controls the embedding capacity and $\sigma_B$ controls the highest spatial frequency that the hypernetwork can resolve.
The hypernetwork $H_\theta$ is a fully connected network whose output dimension $p$ specifies the dimension of the spatial code. This code is concatenated with the local input representation to condition the shared local MLP, whose width and depth are reported separately.
We use a smaller local-MLP width, a smaller batch size, and a lower learning rate on the 3D benchmark to accommodate the memory footprint of the multi-channel volumetric fields.
The learnable parameters are initialized as follows: the hypernetwork output layer to zero, so that the initial code field is spatially uniform, and all remaining weights with Kaiming uniform initialization.

\begin{table}[t]
\centering
\caption{Complete SANO hyperparameter configuration by spatial dimension.}
\label{tab:hyperparameters}
\footnotesize
\setlength{\tabcolsep}{8.0pt}
\renewcommand{\arraystretch}{1}
\begin{tabular}{llll}
\toprule
Hyperparameter & 1D & 2D & 3D \\
\midrule
Number of subregions $M$        & $16$   & $8\times8$ & $4\times4\times4$ \\
Subregion overlap ratio         & $25\%$ & $25\%$     & $25\%$ \\
Sampling points $K_i$           & $4$    & $9$        & $8$ \\
Code interpolation              & Linear & Linear     & Linear \\
PoU weight $\phi_i$             & Cosine & Cosine     & Cosine \\
\midrule
Fourier features $m$            & $64$   & $128$      & $96$ \\
Frequency scale $\sigma_B$      & $4$    & $4$        & $3$ \\
\midrule
Hypernetwork depth              & $3$    & $4$        & $4$ \\
Hypernetwork width              & $128$  & $256$      & $256$ \\
Code dimension $p$              & $128$  & $256$      & $256$ \\
\midrule
Local MLP width                 & $64$   & $64$       & $32$ \\
Local MLP depth                 & $4$    & $4$        & $4$ \\
\midrule
Batch size                      & $16$   & $8$        & $2$ \\
Learning rate                   & $10^{-3}$ & $10^{-3}$ & $5\times10^{-4}$ \\
$\lambda_{\mathrm{reg}}$        & $10^{-4}$ & $10^{-4}$ & $10^{-4}$ \\
\bottomrule
\end{tabular}
\end{table}
 
\subsection{Detailed Benchmark PDEs}
\label{app:benchmark_pdes}

This section provides the governing equations, heterogeneous-field
configurations, and physical motivation for the five benchmarks introduced
in Section~\ref{sec:benchmark_pdes}. All benchmarks use periodic boundary
conditions. Unless stated otherwise, the heterogeneous coefficients follow
the common construction in Eq.~\eqref{eq:heterogeneity}. Each heterogeneous
field is generated once and remains fixed across all trajectories and data
splits.

\paragraph{Burgers equation.}
The Burgers equation \cite{BLAW:18:PRAMANA} evaluates nonlinear transport
under spatially varying dissipation. It is written in conservative form as
\begin{equation}
\partial_t u + u\partial_x u
=
\partial_x\left(\nu(x)\partial_x u\right).
\end{equation}
The viscosity $\nu(x)$ follows Eq.~\eqref{eq:heterogeneity} with
$\nu_0=10^{-2}$ and $\alpha_\nu=0.5$, and is clipped to
\[
\nu(x)\in
\left[5\times10^{-3},\,1.5\times10^{-2}\right].
\]
Its modulation field contains three Gaussian bumps centered at
$\lrA{\pi/2,\pi,3\pi/2}$, with amplitudes
$\lrA{1.0,-0.7,0.9}$ and periodic widths
$\lrA{0.35,0.50,0.30}$.
Low-viscosity regions develop steeper, shock-like structures, whereas
high-viscosity regions are smoothed more rapidly. The benchmark therefore
tests whether an operator can model region-dependent gradient sharpening.

\paragraph{Kuramoto--Sivashinsky equation.}
The Kuramoto--Sivashinsky equation \cite{KT:76:PTP} evaluates chaotic
spatiotemporal evolution under spatially nonuniform excitation. We use the
forced formulation
\begin{equation}
\label{eq:ks}
\partial_t u
+
u\partial_x u
+
\partial_{xx}u
+
\partial_{xxxx}u
=
f_{\mathrm{KS}}(x).
\end{equation}
Here, $\partial_{xx}u$ induces long-wave instability and
$\partial_{xxxx}u$ provides small-scale dissipation.
The fixed forcing $f_{\mathrm{KS}}(x)$ is composed of three sinusoidal
modes on a domain of length $L=64$, with amplitude $A_f=0.05$.
It has zero spatial mean and therefore introduces persistent
region-dependent excitation without changing the global mean state.
This setting tests long-horizon spectral prediction under a fixed spatial
bias.

\paragraph{Allen--Cahn equation.}
The Allen--Cahn equation \cite{allen1979microscopic} is used to study
interface-dominated evolution in a heterogeneous bistable medium:
\begin{equation}
\partial_t u
=
\varepsilon_{\mathrm{ac}}\Delta u
-
\gamma(\B{x})\left(u^3-u\right).
\end{equation}
The interfacial diffusion coefficient is
$\varepsilon_{\mathrm{ac}}=10^{-3}$.
The reaction rate follows Eq.~\eqref{eq:heterogeneity} with
$\gamma_0=1.0$ and $\alpha_\gamma=0.5$, and is clipped to
\[
\gamma(\B{x})\in[0.5,1.5].
\]
The modulation field combines a
$\sin(2\pi x)\sin(2\pi y)$ component with two localized Gaussian bumps of
opposite signs.
Spatial variation in $\gamma(\B{x})$ produces region-dependent interface
contraction and migration, making this benchmark sensitive to the
preservation of sharp moving interfaces.

\paragraph{Fisher--KPP equation.}
The Fisher--KPP equation
\cite{fisher1937wave,kolmogorov1937study} evaluates nonlinear
reaction--diffusion front propagation through a heterogeneous environment:
\begin{equation}
\partial_t u
=
D\Delta u
+
r(\B{x})u(1-u).
\end{equation}
We set $D=10^{-3}$. The growth rate $r(\B{x})$ follows
Eq.~\eqref{eq:heterogeneity} with $r_0=1.0$ and $\alpha_r=0.5$, and is
clipped to
\[
r(\B{x})\in[0.5,1.5].
\]
No time-dependent source term is introduced, so $r(\B{x})$ is the only
persistent spatial structure.
Reaction fronts propagate faster through high-growth regions and slow down
in low-growth regions. The benchmark therefore evaluates front-position
accuracy under a spatially varying propagation speed.

\paragraph{Three-dimensional compressible flow.}
The 3D compressible-flow benchmark evaluates scalability to strongly
coupled, multi-field conservation-law dynamics. Its governing equations are
\begin{subequations}
\label{eq:cf}
\begin{align}
\partial_t\rho
+
\M{\nabla}\cdot(\rho\B{v})
&=0,\\
\partial_t(\rho\B{v})
+
\M{\nabla}\cdot
\left(
\rho\B{v}\B{v}^{\top}
+
p\B{I}_3
\right)
&=0,\\
\partial_t\mathcal{E}
+
\M{\nabla}\cdot
\left(
(\mathcal{E}+p)\B{v}
\right)
&=0.
\end{align}
\end{subequations}
Here, $\rho$ denotes density, $\B{v}\in\mathbb{R}^3$ velocity,
$p$ pressure, $\mathcal{E}$ total energy, and $\B{I}_3$ the
$3\times3$ identity matrix. The system is closed by
\begin{equation}
p
=
(\gamma_{\mathrm{cf}}-1)
\left(
\mathcal{E}
-
\frac{1}{2}\rho\|\B{v}\|_2^2
\right),
\qquad
\gamma_{\mathrm{cf}}=\frac{5}{3}.
\end{equation}

Spatial heterogeneity is introduced through a fixed perturbation-gain field
$m_{\mathrm{cf}}(\B{x})\in[0.6,1.4]$. It follows
Eq.~\eqref{eq:heterogeneity} with unit nominal value and
$\alpha_m=0.4$, and modulates the amplitude of the random initial
perturbations. Different regions consequently exhibit persistent
differences in their initial energy statistics without requiring changes to
the computational geometry.
This benchmark evaluates spatial adaptivity in a three-dimensional system
where density, velocity, pressure, and total energy evolve jointly.

\begin{table}[h]
\centering
\caption{Spatially heterogeneous PDE benchmarks.}
\label{tab:pde_benchmarks}
\small
\renewcommand{\arraystretch}{1}
\begin{threeparttable}
\begin{tabular}{llllll}
\toprule
PDE & Dim. & Domain & Resolution & Time; frames & Heterogeneity \\
\midrule
Burgers & 1D & $[0,2\pi)$ & $1024$ & $[0,1]$; $101$
        & $\nu(x)$ \\
KS & 1D & $[0,64)$ & $1024$ & $[0,10]$; $101$
   & $f_{\mathrm{KS}}(x)$ \\
AC & 2D & $[0,1)^2$ & $64\times64$ & $[0,1]$; $21$
   & $\gamma(\B{x})$ \\
FKPP & 2D & $[0,1)^2$ & $64\times64$ & $[0,1]$; $21$
     & $r(\B{x})$ \\
CF & 3D & $[0,1)^3$ & $32^3$ & $[0,1]$; $21$
   & $m_{\mathrm{cf}}(\B{x})$ \\
\bottomrule
\end{tabular}
\end{threeparttable}
\end{table}

\subsection{Dataset Generation Details}
\label{app:dataset_generation}

For every benchmark, the spatially heterogeneous coefficient or forcing
field is generated once and held fixed across the training, validation,
and test sets. Individual trajectories differ only through their randomly
sampled initial conditions. This construction isolates the effect of the
persistent heterogeneous medium while preventing the model from directly
observing the underlying coefficient field.

\paragraph{One-dimensional initial conditions.}
For the Burgers and Kuramoto--Sivashinsky benchmarks, the initial state is
sampled as a random multi-frequency Fourier profile,
\begin{equation}
\label{eq:initial_condition_1d}
u_0(x)
=
A_0
\sum_{k=1}^{K}
a_k k^{-\beta}
\sin\left(kx+\phi_k\right).
\end{equation}
The Fourier coefficients and phases are sampled according to
\begin{equation}
a_k\sim\mathcal{N}(0,1),
\qquad
\phi_k\sim\mathcal{U}(0,2\pi).
\end{equation}
The spectral-decay parameter $\beta$ controls the relative contribution of
high-frequency components.

For Burgers, we use
\begin{equation}
K=8,
\qquad
\beta\sim\mathcal{U}(1.2,2.0),
\qquad
A_0\sim\mathcal{U}(0.8,1.5).
\end{equation}
The sampled profile is mean-removed and amplitude-normalized.

For KS, we use
\begin{equation}
K=12,
\qquad
\beta\sim\mathcal{U}(1.0,1.8),
\qquad
A_0\sim\mathcal{U}(0.5,1.0).
\end{equation}
Only the spatial mean is removed. Compared with initial conditions formed
from a small set of fixed low-frequency modes, these distributions generate
transport and chaotic structures with more diverse spatial scales,
locations, and steepness.

\paragraph{Allen--Cahn initial conditions.}
For the Allen--Cahn benchmark, an auxiliary Gaussian random field is drawn
according to
\begin{equation}
g
\sim
\mathcal{N}
\left(
0,
\left(-\Delta+\tau^2\B{I}\right)^{-\alpha}
\right),
\end{equation}
where
\begin{equation}
\alpha\sim\mathcal{U}(2.0,3.0),
\qquad
\tau\sim\mathcal{U}(2.5,4.0).
\end{equation}
After standardization, the field is mapped to a bistable configuration:
\begin{equation}
u_0(\B{x})
=
\tanh\left(s\widetilde{g}(\B{x})+b\right)
+
\eta(\B{x}),
\end{equation}
where
\begin{equation}
s\sim\mathcal{U}(1.2,2.0),
\qquad
b\sim\mathcal{U}(-0.2,0.2),
\qquad
\eta(\B{x})\sim\mathcal{N}(0,0.03^2).
\end{equation}
The resulting field is clipped to $[-1,1]$. This procedure generates phase
configurations with varying numbers of interfaces, spatial locations, and
curvatures.

\paragraph{Fisher--KPP initial conditions.}
For the Fisher--KPP benchmark, each initial condition is constructed from
a random collection of localized Gaussian seeds:
\begin{equation}
u_0(\B{x})
=
\sum_{j=1}^{J}
A_j
\exp\left(
-\frac{
d_{\mathrm{per}}(\B{x},\B{\mu}_j)^2
}{
2\sigma_j^2
}
\right)
+
\eta(\B{x}),
\end{equation}
where $d_{\mathrm{per}}$ denotes the periodic distance and the centers
$\B{\mu}_j$ are sampled uniformly over the domain. The number, amplitudes,
and widths of the seeds are sampled as
\begin{equation}
J\sim\mathcal{U}\{2,3,4,5\},
\qquad
A_j\sim\mathcal{U}(0.4,1.0),
\qquad
\sigma_j\sim\mathcal{U}(0.04,0.12).
\end{equation}
The additive perturbation satisfies
\begin{equation}
\eta(\B{x})\sim\mathcal{N}(0,0.01^2).
\end{equation}
The final field is clipped to $[0,1]$. An additional randomly oriented
elliptical seed is included with a fixed sampling probability to increase
the diversity of initial front geometries.

\paragraph{Compressible-flow initial conditions.}
For the 3D compressible-flow benchmark, five independent smooth Gaussian
random fields are sampled in Fourier space. Their spectra satisfy
\begin{equation}
\widehat{g}(\B{k})
\propto
\xi_{\B{k}}
\left(
\|\B{k}\|_2^2+\tau^2
\right)^{-\alpha/2},
\end{equation}
where
\begin{equation}
\alpha\sim\mathcal{U}(2.5,4.0),
\qquad
\tau\sim\mathcal{U}(2.0,4.0),
\end{equation}
and $\xi_{\B{k}}$ is complex standard Gaussian noise subject to the
Hermitian symmetry required to obtain real-valued spatial fields.

Density and pressure are initialized as
\begin{equation}
\begin{aligned}
\rho(\B{x},0)
&=
\rho_0
\left[
1+
A_\rho
m_{\mathrm{cf}}(\B{x})
\widetilde{g}_{\rho}(\B{x})
\right],\\
p(\B{x},0)
&=
p_0
\left[
1+
A_p
m_{\mathrm{cf}}(\B{x})
\widetilde{g}_{p}(\B{x})
\right].
\end{aligned}
\end{equation}
The reference values and perturbation amplitudes are
\begin{equation}
\rho_0=1,
\qquad
p_0=\frac{1}{\gamma_{\mathrm{cf}}},
\qquad
A_\rho,A_p\sim\mathcal{U}(0.03,0.08).
\end{equation}
Each velocity component is generated from an independent smooth random
field, modulated by $m_{\mathrm{cf}}(\B{x})$, and then uniformly rescaled so
that the initial root-mean-square Mach number satisfies
\begin{equation}
\mathrm{Ma}_{\mathrm{rms}}=0.3.
\end{equation}
The total energy is recovered from the equation of state,
\begin{equation}
\mathcal{E}
=
\frac{p}{\gamma_{\mathrm{cf}}-1}
+
\frac{1}{2}\rho\|\B{v}\|_2^2,
\end{equation}
rather than sampled independently. For numerical stability, the primitive
variables are constrained by
\begin{equation}
\rho\geq0.2,
\qquad
p\geq10^{-4}.
\end{equation}

\paragraph{Numerical trajectory generation.}
The 1D Burgers and KS trajectories are generated using high-accuracy
spectral integration methods. For the 2D Allen--Cahn and Fisher--KPP
benchmarks, the simulator uses the internal step size
\begin{equation}
\Delta t_{\mathrm{sim}}=10^{-3},
\end{equation}
with $50$ numerical substeps between consecutive recorded frames.

The 3D compressible-flow trajectories are generated using a finite-volume
scheme with the Rusanov numerical flux, second-order
total-variation-diminishing Runge--Kutta time integration, and a CFL number
of $0.3$.

Only the recorded states listed in Table~\ref{tab:pde_benchmarks} are made
available to the learning algorithms. Consequently, the model learns an
initial-state-to-trajectory mapping under a spatially heterogeneous but
unobserved medium. It is not supplied with the coefficient or forcing field
and is therefore not trained as a coefficient-conditioned regression model.

\subsection{Long-horizon Rollout Performance}
\label{long}
To evaluate temporal stability beyond single-step prediction, we conduct autoregressive rollout experiments on the Burgers, KS, AC, FKPP, and CF benchmarks. 
Starting from the same initial condition, each model predicts the next solution state and feeds its own prediction back as the input for subsequent steps. 
We report MSE, Rel-\(H^1\), and Rel-\(L_2\) at multiple rollout checkpoints in Figure~\ref{fig:rollout_metrics}. 
The horizontal axis denotes the normalized prediction checkpoint along the rollout horizon, with larger values corresponding to later autoregressive steps, where error accumulation is typically more severe. 
Across all benchmarks and metrics, SANO consistently yields the lowest rollout errors, and the gap becomes more visible at later checkpoints where autoregressive error accumulation is more severe.
More specifically, SANO maintains a clear advantage on the 1D Burgers benchmark and remains substantially below the competing baselines on the chaotic KS dynamics, where several methods exhibit rapid growth in MSE and relative errors.  
The same trend extends to the 2D and 3D PDE case: despite the multi-channel fluid dynamics, SANO stays below FNO, UNO, CNO, HyperDeepONet, and HyPINO throughout the rollout horizon. 
These observations indicate that the spatially continuous code field provides a stable inductive bias for long-horizon PDE prediction.

\begin{figure*}[t]
    \centering
    \includegraphics[width=0.85\linewidth]{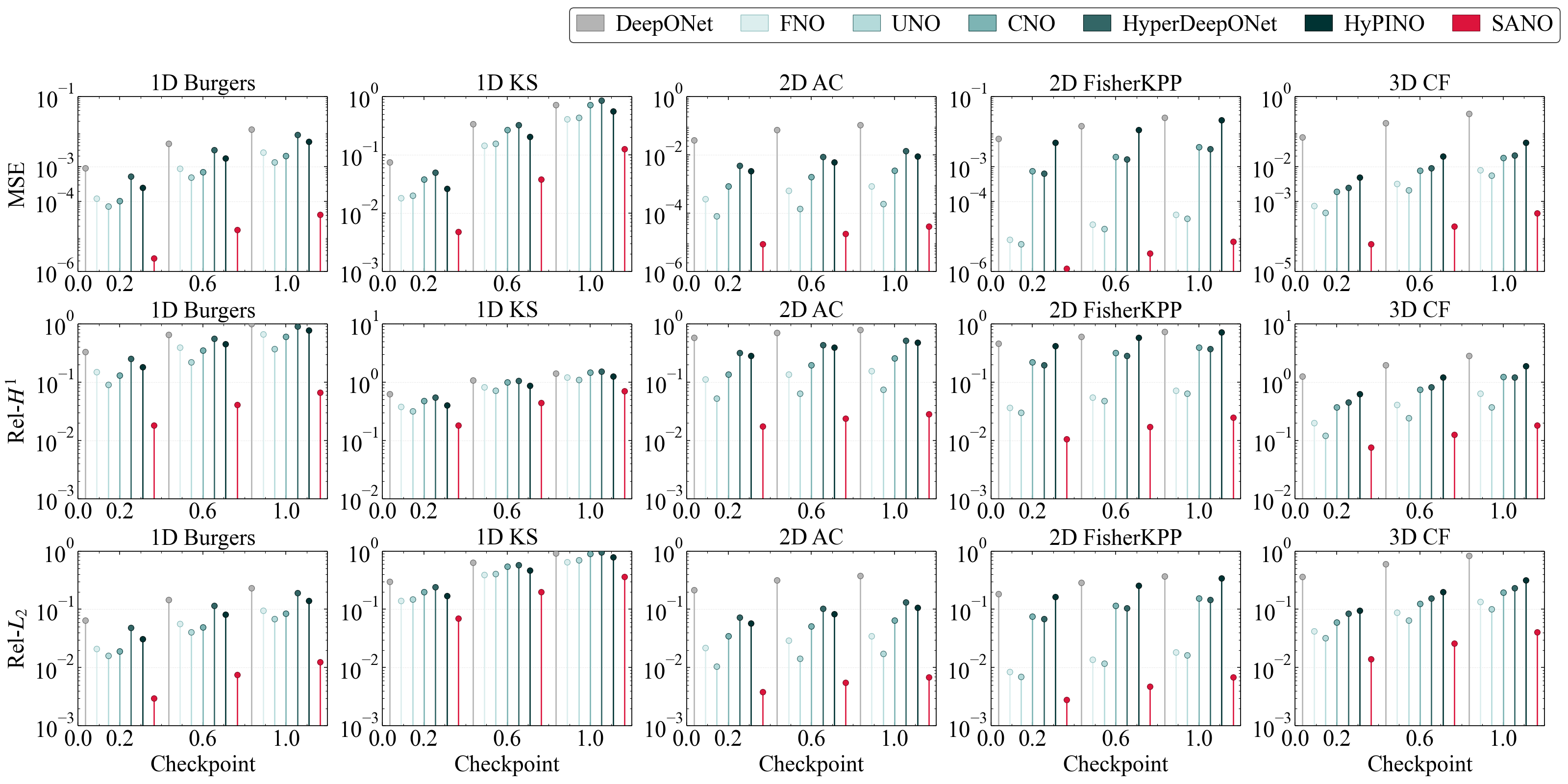}
    \caption{Long-horizon rollout errors across five PDE benchmarks. 
Rows report MSE, Rel-\(H^1\), and Rel-\(L_2\); columns denote different benchmarks. 
SANO maintains lower errors over prediction checkpoints.}
    \label{fig:rollout_metrics}
\end{figure*}

\subsection{Cross-parameter Generalization}
\label{subsec:cross_parameter_generalization}

To examine whether SANO can interpolate across unseen physical parameters, we conduct a cross-parameter experiment on the 2D Allen--Cahn benchmark at a spatial resolution of \(64\times64\). 
The training set mixes two diffusion coefficients, \(\varepsilon=5\times10^{-4}\) and \(\varepsilon=2\times10^{-3}\), with 500 samples for each coefficient, while all models are tested on 100 samples generated by the intermediate coefficient \(\varepsilon=10^{-3}\). 
This setting evaluates whether the learned operator can transfer across diffusion strengths rather than memorizing a single parameter regime.

\begin{table*}[t!]
\centering
\caption{Cross-parameter generalization results on the 2D AC benchmark}
\label{tab:cross_parameter_ac}
\begin{threeparttable}
\begin{tabular}{llllll}
\toprule
Model 
& F-RMSE\(\downarrow\) 
& RMSE\(\downarrow\) 
& MSE\(\downarrow\) 
& Rel-\(L_2\)\(\downarrow\) 
& Rel-\(H^1\)\(\downarrow\)\\
\midrule
\midrule
DeepONet      & $5.09 \times 10^{-1}$ & $3.30 \times 10^{-1}$ & $1.09 \times 10^{-1}$ & $4.04 \times 10^{-1}$ & $6.83 \times 10^{-1}$  \\
FNO           & $5.74 \times 10^{-2}$ & $4.35 \times 10^{-2}$ & $1.89 \times 10^{-3}$ & $4.92 \times 10^{-2}$ & $2.52 \times 10^{-1}$  \\
UNO           & $5.17 \times 10^{-2}$ & $3.78 \times 10^{-2}$ & $1.43 \times 10^{-3}$ & $3.55 \times 10^{-2}$ & $1.18 \times 10^{-1}$  \\
CNO           & $1.17 \times 10^{-1}$ & $8.55 \times 10^{-2}$ & $7.31 \times 10^{-3}$ & $8.33 \times 10^{-2}$ & $2.98 \times 10^{-1}$ \\
HyperDeepONet & $1.48 \times 10^{-1}$ & $1.06 \times 10^{-1}$ & $1.12 \times 10^{-2}$ & $1.14 \times 10^{-1}$ & $5.51 \times 10^{-1}$  \\
HyPINO        & $1.21 \times 10^{-1}$ & $8.80 \times 10^{-2}$ & $7.74 \times 10^{-3}$ & $9.82 \times 10^{-2}$ & $5.28 \times 10^{-1}$  \\
SANO          & \cellcolor[HTML]{C6ECE5}$2.67 \times 10^{-2}$ & \cellcolor[HTML]{C6ECE5}$1.86 \times 10^{-2}$ & \cellcolor[HTML]{C6ECE5}$3.46 \times 10^{-4}$ & \cellcolor[HTML]{C6ECE5}$2.04 \times 10^{-2}$ & \cellcolor[HTML]{C6ECE5}$6.04 \times 10^{-2}$ \\
\bottomrule
\end{tabular}
\end{threeparttable}
\end{table*}

\begin{table*}[t]
\centering
\caption{Cross-resolution generalization results on the 2D AC benchmark}
\setlength{\tabcolsep}{4.5pt}
\label{tab:cross_resolution_ac}
\begin{threeparttable}
\begin{tabular}{llllllll}
\toprule
Resolution & Model 
& MSE@1\(\downarrow\) 
& Rel-\(L_2\)@1\(\downarrow\) 
& Rel-\(H^1\)@1\(\downarrow\) 
& MSE@16\(\downarrow\) 
& Rel-\(L_2\)@16\(\downarrow\) 
& Rel-\(H^1\)@16\(\downarrow\) \\
\midrule
\midrule

\multirow{7}{*}{\(32\times32\)}
& DeepONet      & $1.47 \times 10^{-1}$ & $4.96 \times 10^{-1}$ & $8.07 \times 10^{-1}$ & $1.99 \times 10^{-1}$ & $5.43 \times 10^{-1}$ & $9.10 \times 10^{-1}$ \\
& FNO           & $3.46 \times 10^{-5}$ & $7.71 \times 10^{-3}$ & $2.80 \times 10^{-2}$ & $9.98 \times 10^{-4}$ & $3.80 \times 10^{-2}$ & $9.79 \times 10^{-2}$ \\
& UNO           & $1.93 \times 10^{-5}$ & $5.77 \times 10^{-3}$ & $2.09 \times 10^{-2}$ & $4.33 \times 10^{-4}$ & $2.53 \times 10^{-2}$ & $6.88 \times 10^{-2}$ \\
& CNO           & $1.53 \times 10^{-3}$ & $5.11 \times 10^{-2}$ & $1.73 \times 10^{-1}$ & $1.63 \times 10^{-2}$ & $1.56 \times 10^{-1}$ & $3.48 \times 10^{-1}$ \\
& HyperDeepONet & $1.54 \times 10^{-4}$ & $1.64 \times 10^{-2}$ & $5.52 \times 10^{-2}$ & $1.83 \times 10^{-2}$ & $1.66 \times 10^{-1}$ & $4.54 \times 10^{-1}$ \\
& HyPINO        & $1.37 \times 10^{-4}$ & $1.54 \times 10^{-2}$ & $5.52 \times 10^{-2}$ & $1.12 \times 10^{-2}$ & $1.30 \times 10^{-1}$ & $3.78 \times 10^{-1}$ \\
& SANO          & \cellcolor[HTML]{C6ECE5}$2.13 \times 10^{-6}$ & \cellcolor[HTML]{C6ECE5}$1.90 \times 10^{-3}$ & \cellcolor[HTML]{C6ECE5}$6.21 \times 10^{-3}$ & \cellcolor[HTML]{C6ECE5}$4.74 \times 10^{-5}$ & \cellcolor[HTML]{C6ECE5}$8.39 \times 10^{-3}$ & \cellcolor[HTML]{C6ECE5}$2.34 \times 10^{-2}$ \\
\midrule

\multirow{7}{*}{\(64\times64\)}
& DeepONet      & $1.46 \times 10^{-1}$ & $4.96 \times 10^{-1}$ & $7.76 \times 10^{-1}$ & $2.13 \times 10^{-1}$ & $5.64 \times 10^{-1}$ & $9.28 \times 10^{-1}$ \\
& FNO           & $4.91 \times 10^{-4}$ & $2.94 \times 10^{-2}$ & $2.14 \times 10^{-1}$ & $1.88 \times 10^{-3}$ & $5.59 \times 10^{-2}$ & $2.80 \times 10^{-1}$ \\
& UNO           & $4.41 \times 10^{-4}$ & $2.79 \times 10^{-2}$ & $2.01 \times 10^{-1}$ & $1.64 \times 10^{-3}$ & $5.19 \times 10^{-2}$ & $2.46 \times 10^{-1}$ \\
& CNO           & $1.32 \times 10^{-3}$ & $4.78 \times 10^{-2}$ & $2.68 \times 10^{-1}$ & $1.91 \times 10^{-2}$ & $1.69 \times 10^{-1}$ & $5.73 \times 10^{-1}$ \\
& HyperDeepONet & $7.97 \times 10^{-4}$ & $3.71 \times 10^{-2}$ & $2.62 \times 10^{-1}$ & $1.89 \times 10^{-2}$ & $1.68 \times 10^{-1}$ & $6.82 \times 10^{-1}$ \\
& HyPINO        & $7.79 \times 10^{-4}$ & $3.71 \times 10^{-2}$ & $2.62 \times 10^{-1}$ & $1.26 \times 10^{-2}$ & $1.39 \times 10^{-1}$ & $6.39 \times 10^{-1}$ \\
& SANO          & \cellcolor[HTML]{C6ECE5}$1.17 \times 10^{-4}$ & \cellcolor[HTML]{C6ECE5}$1.44 \times 10^{-2}$ & \cellcolor[HTML]{C6ECE5}$1.06 \times 10^{-1}$ & \cellcolor[HTML]{C6ECE5}$3.20 \times 10^{-4}$ & \cellcolor[HTML]{C6ECE5}$2.32 \times 10^{-2}$ & \cellcolor[HTML]{C6ECE5}$1.49 \times 10^{-1}$ \\
\midrule

\multirow{7}{*}{\(128\times128\)}
& DeepONet      & $1.59 \times 10^{-1}$ & $5.13 \times 10^{-1}$ & $8.50 \times 10^{-1}$ & $2.55 \times 10^{-1}$ & $6.21 \times 10^{-1}$ & $1.08 \times 10^{0}$ \\
& FNO           & $1.79 \times 10^{-3}$ & $6.21 \times 10^{-2}$ & $4.46 \times 10^{-1}$ & $6.23 \times 10^{-3}$ & $1.15 \times 10^{-1}$ & $6.54 \times 10^{-1}$ \\
& UNO           & $1.46 \times 10^{-3}$ & $5.54 \times 10^{-2}$ & $3.83 \times 10^{-1}$ & $5.40 \times 10^{-3}$ & $1.05 \times 10^{-1}$ & $5.50 \times 10^{-1}$ \\
& CNO           & $3.21 \times 10^{-3}$ & $7.60 \times 10^{-2}$ & $3.89 \times 10^{-1}$ & $3.82 \times 10^{-2}$ & $2.49 \times 10^{-1}$ & $8.23 \times 10^{-1}$ \\
& HyperDeepONet & $2.20 \times 10^{-3}$ & $6.13 \times 10^{-2}$ & $3.56 \times 10^{-1}$ & $4.52 \times 10^{-2}$ & $2.65 \times 10^{-1}$ & $9.53 \times 10^{-1}$ \\
& HyPINO        & $2.00 \times 10^{-3}$ & $5.99 \times 10^{-2}$ & $3.53 \times 10^{-1}$ & $3.46 \times 10^{-2}$ & $2.36 \times 10^{-1}$ & $8.43 \times 10^{-1}$ \\
& SANO          & \cellcolor[HTML]{C6ECE5}$5.59 \times 10^{-4}$ & \cellcolor[HTML]{C6ECE5}$3.16 \times 10^{-2}$ & \cellcolor[HTML]{C6ECE5}$2.03 \times 10^{-1}$ & \cellcolor[HTML]{C6ECE5}$1.35 \times 10^{-3}$ & \cellcolor[HTML]{C6ECE5}$5.40 \times 10^{-2}$ & \cellcolor[HTML]{C6ECE5}$2.89 \times 10^{-1}$ \\
\bottomrule
\end{tabular}
\end{threeparttable}
\end{table*}

Table~\ref{tab:cross_parameter_ac} shows that SANO achieves the best accuracy on all reported metrics under the unseen diffusion coefficient. 
Among the baselines, UNO gives the strongest overall results, but SANO further reduces F-RMSE, RMSE, MSE, Rel-\(L_2\), and Rel-\(H^1\) by approximately \(48.4\%\), \(50.8\%\), \(75.8\%\), \(42.5\%\), and \(48.8\%\), respectively, while using a comparable parameter budget. 
The larger gaps relative to DeepONet, CNO, HyperDeepONet, and HyPINO indicate that merely increasing model flexibility or conditioning capacity is insufficient for stable interpolation across diffusion regimes. 
These results suggest that SANO learns a more transferable operator representation for parameter-dependent PDE dynamics.

\subsection{Cross-resolution Generalization}
\label{subsec:cross_resolution_generalization}
To test whether SANO learns grid-transferable dynamics, we conduct a cross-resolution extrapolation experiment on the 2D AC benchmark, following the broader interest in resolution transfer for spectral neural operators~\cite{DBLP:conf/iclr/LiKALBSA21,pathak2022fourcastnet,zhou2026tf}. 
All models are trained only on \(32\times32\) spatial grids and are then evaluated on \(32\times32\), \(64\times64\), and \(128\times128\) test resolutions. 
This setting examines whether a model trained at a low spatial resolution can generalize to finer grids while maintaining stable long-horizon rollout performance. 
Since the underlying PDE dynamics remain the same across resolutions, performance degradation at higher resolutions mainly reflects the model's ability to transfer its learned operator representation across discretizations.
For cross-resolution evaluation, we report MSE, Rel-\(L_2\), and Rel-\(H^1\) at two rollout checkpoints. 
The notation @1 denotes the first prediction step, which reflects the model's immediate interpolation or extrapolation accuracy after one update. 
The notation @16 denotes the sixteenth autoregressive prediction step, which evaluates error accumulation and rollout stability under repeated model applications. 
Comparing @1 and @16 allows us to assess not only the spatial resolution transfer ability, but also whether the transferred model remains stable over longer prediction horizons.

The cross-resolution results in Table~\ref{tab:cross_resolution_ac} show that SANO maintains the best performance when transferring from the training resolution \(32\times32\) to higher test resolutions. 
At the training resolution \(32\times32\), SANO achieves the lowest errors at both the first and sixteenth rollout steps, with MSE@1 of \(2.13 \times 10^{-6}\) and MSE@16 of \(4.74 \times 10^{-5}\). 
When evaluated at \(64\times64\), SANO still outperforms the strongest baseline in MSE@16, reducing MSE@16 from \(1.64 \times 10^{-3}\) to \(3.20 \times 10^{-4}\). 
This indicates that the learned operator representation can be transferred to a finer grid while retaining stable rollout behavior.
The advantage remains clear under the more challenging \(128\times128\) extrapolation setting. 
Although errors generally grow under harder high-resolution extrapolation, SANO achieves the lowest MSE@16 of \(1.35 \times 10^{-3}\), compared with \(5.40 \times 10^{-3}\) from the strongest baseline in MSE@16. 
It also obtains lower Rel-\(L_2\)@16 and Rel-\(H^1\)@16 (\(5.40 \times 10^{-2}\) and \(2.89 \times 10^{-1}\), respectively), showing that the prediction remains more accurate in both global field amplitude and local gradient structure.

\subsection{Hyperparameter Sensitivity}
\label{app:hyperparameter_sensitivity}

To test whether SANO remains stable under different spatial decompositions and sampling densities, we vary i) the number of subregions \(M\) while fixing the number of sampling points per subregion \(K_i\), and ii) \(K_i\) while fixing \(M\).
We conduct this analysis on the one-dimensional Burgers and two-dimensional Allen--Cahn (AC) benchmarks.
For the AC benchmark, \(M=m_x\times m_y\) denotes an \(m_x\)-by-\(m_y\) subregion partition.
The relative \(L_2\) and relative \(H_1\) errors are reported after the first rollout step and at the final rollout step.

\begin{table*}[t]
    \centering
    \caption{
    Hyperparameter sensitivity on the 1D Burgers and 2D AC benchmarks.
    Increasing the number of subregions \(M\) or the number of sampling points
    per subregion \(K_i\) consistently reduces the relative \(L_2\) and
    relative \(H^1\) errors.
    }
    \label{tab:hyperparameter_sensitivity}
    \begin{tabular}{lllllll}
        \toprule
        Benchmark
        & \(M\)
        & \(K_i\)
        & Rel-\(L_2\) (\(t=1\))
        & Rel-\(H^1\) (\(t=1\))
        & Rel-\(L_2\) (final)
        & Rel-\(H^1\) (final) \\
        \midrule

        \multicolumn{7}{c}{
        \textit{Sensitivity to the number of subregions \(M\)}
        } \\
        \midrule

        1D Burgers
        & \(8\)
        & \(4\)
        & \(8.18\times10^{-3}\)
        & \(4.24\times10^{-2}\)
        & \(1.75\times10^{-2}\)
        & \(8.56\times10^{-2}\) \\

        1D Burgers
        & \(16\)
        & \(4\)
        & \(6.25\times10^{-3}\)
        & \(3.32\times10^{-2}\)
        & \(1.25\times10^{-2}\)
        & \(6.64\times10^{-2}\) \\

        1D Burgers
        & \(32\)
        & \(4\)
        & \(\mathbf{5.52\times10^{-3}}\)
        & \(\mathbf{3.05\times10^{-2}}\)
        & \(\mathbf{9.83\times10^{-3}}\)
        & \(\mathbf{5.95\times10^{-2}}\) \\

        \addlinespace[0.3em]

        2D AC
        & \(4\times4\)
        & \(9\)
        & \(5.28\times10^{-3}\)
        & \(2.10\times10^{-2}\)
        & \(1.04\times10^{-2}\)
        & \(4.28\times10^{-2}\) \\

        2D AC
        & \(8\times8\)
        & \(9\)
        & \(3.47\times10^{-3}\)
        & \(1.43\times10^{-2}\)
        & \(6.94\times10^{-3}\)
        & \(2.85\times10^{-2}\) \\

        2D AC
        & \(12\times12\)
        & \(9\)
        & \(\mathbf{3.08\times10^{-3}}\)
        & \(\mathbf{1.27\times10^{-2}}\)
        & \(\mathbf{5.53\times10^{-3}}\)
        & \(\mathbf{2.59\times10^{-2}}\) \\

        \midrule
        \multicolumn{7}{c}{
        \textit{Sensitivity to the number of sampling points \(K_i\)}
        } \\
        \midrule

        1D Burgers
        & \(16\)
        & \(2\)
        & \(7.45\times10^{-3}\)
        & \(3.98\times10^{-2}\)
        & \(1.60\times10^{-2}\)
        & \(7.94\times10^{-2}\) \\

        1D Burgers
        & \(16\)
        & \(4\)
        & \(6.25\times10^{-3}\)
        & \(3.32\times10^{-2}\)
        & \(1.25\times10^{-2}\)
        & \(6.64\times10^{-2}\) \\

        1D Burgers
        & \(16\)
        & \(8\)
        & \(\mathbf{5.09\times10^{-3}}\)
        & \(\mathbf{2.66\times10^{-2}}\)
        & \(\mathbf{8.69\times10^{-3}}\)
        & \(\mathbf{5.31\times10^{-2}}\) \\

        \addlinespace[0.3em]

        2D AC
        & \(8\times8\)
        & \(4\)
        & \(4.11\times10^{-3}\)
        & \(1.72\times10^{-2}\)
        & \(8.24\times10^{-3}\)
        & \(3.44\times10^{-2}\) \\

        2D AC
        & \(8\times8\)
        & \(9\)
        & \(3.47\times10^{-3}\)
        & \(1.43\times10^{-2}\)
        & \(6.94\times10^{-3}\)
        & \(2.85\times10^{-2}\) \\

        2D AC
        & \(8\times8\)
        & \(16\)
        & \(\mathbf{3.14\times10^{-3}}\)
        & \(\mathbf{1.29\times10^{-2}}\)
        & \(\mathbf{6.32\times10^{-3}}\)
        & \(\mathbf{2.54\times10^{-2}}\) \\

        \bottomrule
    \end{tabular}
\end{table*}

\paragraph{Sensitivity to the Number of Subregions.}
Table~\ref{tab:hyperparameter_sensitivity} shows that increasing \(M\) consistently improves both one-step and final-step accuracy.
On the Burgers benchmark, increasing \(M\) from \(8\) to \(32\) reduces the final relative \(L_2\) error from \(1.75\times10^{-2}\) to \(9.83\times10^{-3}\), corresponding to a \(43.8\%\) reduction.
The final relative \(H^1\) error decreases from \(8.56\times10^{-2}\) to \(5.95\times10^{-2}\), a \(30.5\%\) reduction.
On the AC benchmark, refining the partition from \(4\times4\) to \(12\times12\) reduces the final relative \(L_2\) and relative \(H^1\) errors from \(1.04\times10^{-2}\) to \(5.53\times10^{-3}\) and from \(4.28\times10^{-2}\) to \(2.59\times10^{-2}\), respectively.
These reductions correspond to \(46.8\%\) and \(39.5\%\).
The consistent gains indicate that a finer decomposition allows the spatially varying operator parameters to resolve local solution behavior more accurately.

\paragraph{Sensitivity to the Number of Sampling Points.}
Increasing \(K_i\) also reduces all four error measures on both benchmarks.
For Burgers, increasing \(K_i\) from \(2\) to \(8\) reduces the final relative \(L_2\) error from \(1.60\times10^{-2}\) to \(8.69\times10^{-3}\), a \(45.7\%\) reduction, while the final relative \(H_1\) error decreases from \(7.94\times10^{-2}\) to \(5.31\times10^{-2}\), a \(33.1\%\) reduction.
For AC, increasing \(K_i\) from \(4\) to \(16\) reduces the final relative \(L_2\) and relative \(H_1\) errors from \(8.24\times10^{-3}\) to \(6.32\times10^{-3}\) and from \(3.44\times10^{-2}\) to \(2.54\times10^{-2}\), respectively.
These results show that denser sampling improves parameter interpolation within each subregion, although the smaller gains on AC indicate diminishing returns once the local sampling density is sufficiently high.

All tested configurations remain within the same error scale, while finer subregion partitions and denser local sampling produce monotonic improvements.
The default configurations, \(M=16\) and \(K_i=4\) for Burgers and \(M=8\times8\) and \(K_i=9\) for AC, provide a balanced trade-off between approximation accuracy and computational cost.
The sensitivity results therefore support the stability of SANO with respect to both spatial decomposition and local sampling density.

\subsection{Complex-Geometry Evaluation on HZ-G}
\label{app:hz_g_details}

\paragraph{Problem Setup}
To test whether spatially adaptive local operators, we consider the HZ-G benchmark, a two-dimensional modified Helmholtz problem defined on a perforated domain.
The scalar field \(u:\Omega\rightarrow\mathbb{R}\) satisfies
\begin{equation}
-\Delta u(x,y)+k^2u(x,y)=f(x,y),
\qquad (x,y)\in\Omega,
\label{eq:hz_g}
\end{equation}
where
\begin{equation}
\Omega
=
[-1,1]^2
\setminus
\bigcup_{i=1}^{4}R_i
\end{equation}
contains four circular holes.
The source term is
\begin{equation}
f(x,y)
=
A\mu_2^2y
\sin(\mu_1\pi x)
\sin(\mu_2\pi y),
\end{equation}
with
\begin{equation}
\mu_1=1,
\qquad
\mu_2=4,
\qquad
k=8,
\qquad
A=10.
\end{equation}
The outer boundary satisfies \(u=0.2\), whereas all hole boundaries satisfy \(u=1.0\).
This configuration introduces boundary-induced spatial heterogeneity and therefore tests whether a model can adapt its local mapping near irregular internal boundaries.

\paragraph{Baseline Adaptation.}
Standard neural operators such as FNO, UNO, CNO, DeepONet, HyperDeepONet, and HyPINO are not directly defined on the perforated domain.
For a consistent comparison, we embed \(\Omega\) into the bounding square \([-1,1]^2\) and encode the problem on a regular grid as
\begin{equation}
X(x,y)
=
\left[
f(x,y),
m(x,y),
b(x,y),
g(x,y),
x,
y
\right],
\label{eq:hz_g_input}
\end{equation}
where \(m\) is the domain mask, \(b\) is the boundary indicator, and \(g\) stores the prescribed Dirichlet values.
All models receive the same source, geometry, boundary, and coordinate information.

For grid-based models, the supervised loss is evaluated only over the valid physical domain:
\begin{equation}
\mathcal{L}_{\mathrm{data}}
=
\frac{
\sum_x m(x)
\left|
\widehat{u}(x)-u(x)
\right|^2
}{
\sum_x m(x)
},
\end{equation}
and the boundary loss is
\begin{equation}
\mathcal{L}_{\mathrm{bc}}
=
\frac{
\sum_x b(x)
\left|
\widehat{u}(x)-g(x)
\right|^2
}{
\sum_x b(x)
}.
\end{equation}
HyPINO additionally minimizes the residual
\begin{equation}
r(x)
=
-\Delta\widehat{u}(x)
+
k^2\widehat{u}(x)
-
f(x)
\end{equation}
over interior points whose finite-difference stencil remains inside \(\Omega\).
SANO uses the same input information but generates a spatially varying code, allowing the local mapping to change near hole boundaries and smoother interior regions.

\subsection{Perforated-Domain Laplace Benchmark}
\label{app:ps_c_details}

\paragraph{Problem Setup.}
To test whether spatially adaptive local operators in a source-free elliptic problem, we consider the two-dimensional Poisson problem with four circular holes (PS-C).
Because the source term vanishes, PS-C reduces to the Laplace equation
\begin{equation}
-\Delta u(x,y)=0,
\qquad (x,y)\in\Omega,
\label{eq:ps_c}
\end{equation}
where
\begin{equation}
\Delta u
=
\frac{\partial^2 u}{\partial x^2}
+
\frac{\partial^2 u}{\partial y^2}.
\end{equation}
The perforated domain is
\begin{equation}
\Omega
=
[-0.5,0.5]^2
\setminus
\bigcup_{i=1}^{4}R_i,
\label{eq:ps_c_domain}
\end{equation}
with four circular holes
\begin{align}
R_1
&=
\left\{
(x,y):
(x-0.3)^2+(y-0.3)^2\leq 0.1^2
\right\},\\
R_2
&=
\left\{
(x,y):
(x+0.3)^2+(y-0.3)^2\leq 0.1^2
\right\},\\
R_3
&=
\left\{
(x,y):
(x-0.3)^2+(y+0.3)^2\leq 0.1^2
\right\},\\
R_4
&=
\left\{
(x,y):
(x+0.3)^2+(y+0.3)^2\leq 0.1^2
\right\}.
\end{align}
The outer and inner boundaries satisfy
\begin{equation}
u=1
\quad
\text{on }
\partial[-0.5,0.5]^2,
\qquad
u=0
\quad
\text{on }
\partial R_i,
\quad i=1,\ldots,4.
\label{eq:ps_c_bc}
\end{equation}
Under the general second-order linear form
\begin{equation}
c_0u+c_1u_x+c_2u_y+c_3u_{xx}+c_4u_{yy}=f,
\end{equation}
PS-C corresponds to
\begin{equation}
\mathbf{c}
=
(0,0,0,-1,-1),
\qquad
f(x,y)=0.
\end{equation}

The solution is determined entirely by the potential difference between the outer boundary and the four internal boundaries.
PS-C therefore isolates the model's ability to propagate boundary information through a multiply connected domain and resolve the localized gradients surrounding the holes.
All baselines use the same mask-based geometric encoding and loss construction described in Sec.~\ref{subsec:hz_g}.

\subsection{Qualitative Rollout Visualizations}
\label{app:visualization}

To assess whether the predicted fields preserve physically meaningful structures, we visualize rollouts produced by the SANO model on five partial differential equation (PDE) benchmarks:
the one-dimensional Burgers equation, the one-dimensional Kuramoto--Sivashinsky (KS) equation,
the two-dimensional Allen--Cahn (AC) equation,
the two-dimensional Fisher--Kolmogorov--Petrovsky--Piskunov (FKPP) equation,
and three-dimensional compressible flow (3D CF).
The Burgers and KS visualizations present predicted space--time solution fields.
The AC and FKPP visualizations report temporal snapshots at
\(t=0,5,15,\) and \(20\), whereas the 3D CF visualization reports pressure and density fields at
\(t=0,5,10,15,\) and \(20\).
Each comparison includes the ground-truth field, model predictions, and the corresponding absolute-error maps, enabling direct assessment of structural preservation, interface and front evolution, and error propagation across different PDE dynamics.

\subsubsection{One-Dimensional Burgers and Kuramoto--Sivashinsky Equations}

Figure~\ref{fig:vis_burgers} compares the predicted space--time fields and corresponding absolute-error maps on the one-dimensional Burgers benchmark.
SANO reproduces the dominant structures of the ground-truth solution more faithfully than the competing methods.
Its errors remain weaker and less spatially concentrated, whereas several baselines exhibit larger deviations from the reference field.
Figure~\ref{fig:vis_ks} reports the corresponding comparison on the one-dimensional KS benchmark.
The KS equation exhibits phase-sensitive spatiotemporal dynamics with substantial local variation, making long-horizon prediction sensitive to accumulated phase errors.
SANO preserves the dominant spatiotemporal patterns more faithfully, whereas several baselines produce distorted structures and visible phase deviations.
The corresponding error maps remain weaker and less spatially concentrated for SANO over the space--time domain.

\subsubsection{Two-Dimensional Allen--Cahn and Fisher--KPP Equations}

Figure~\ref{fig:vis_ac_fisher} reports rollout predictions on the two-dimensional AC and FKPP benchmarks at
\(t=0,5,15,\) and \(20\).
On the AC benchmark, SANO preserves the evolving interfaces more accurately and produces weaker absolute-error patterns than the competing methods.
Several baselines instead develop distorted interfaces and larger localized errors at later rollout steps.

On the FKPP benchmark, SANO more accurately tracks the propagating fronts and maintains closer agreement with the reference solution throughout the rollout.
Its error maps remain relatively weak across the reported time steps, whereas several competing methods develop larger and more structured errors.
These observations indicate that SANO preserves reaction--diffusion dynamics more faithfully over long rollouts.

\subsubsection{Three-Dimensional Compressible Flow}

Figure~\ref{fig:vis_ns3d} reports rollouts on the 3D CF benchmark at
\(t=0,5,10,15,\) and \(20\) for the pressure and density components.
At each time step, the \(XY\), \(XZ\), and \(YZ\) slices are reported together with the corresponding predictions and absolute-error maps.
SANO better preserves the dominant pressure and density structures across spatial slices and rollout steps.
Its errors remain weaker over most slices, whereas several baselines develop more pronounced structured errors at later steps.
These results indicate that SANO improves component-wise prediction accuracy and rollout stability for high-dimensional fluid states.

\begin{figure*}[t]
    \centering
    \includegraphics[width=\linewidth]{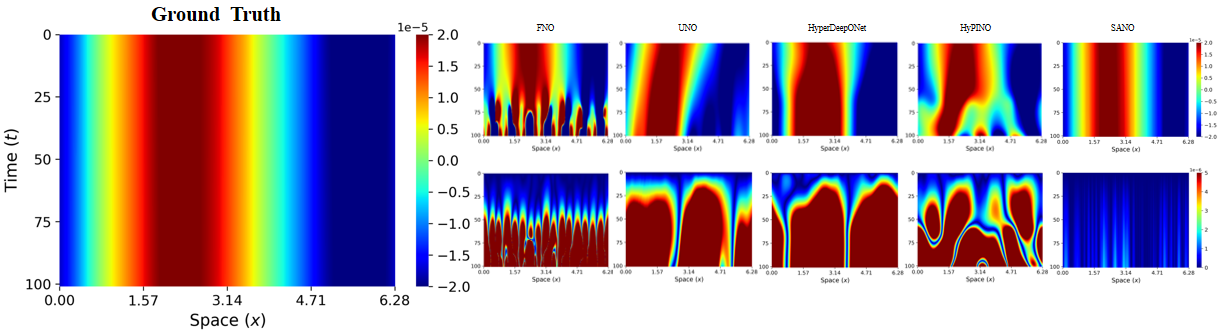}
    \caption{
    1D Burgers dynamics: ground-truth space--time solution, model predictions, and absolute-error maps.
    SANO preserves the dominant solution structures while producing weaker and less spatially concentrated errors than the competing methods.
    }
    \label{fig:vis_burgers}
\end{figure*}

\begin{figure*}[t]
    \centering
    \includegraphics[width=\linewidth]{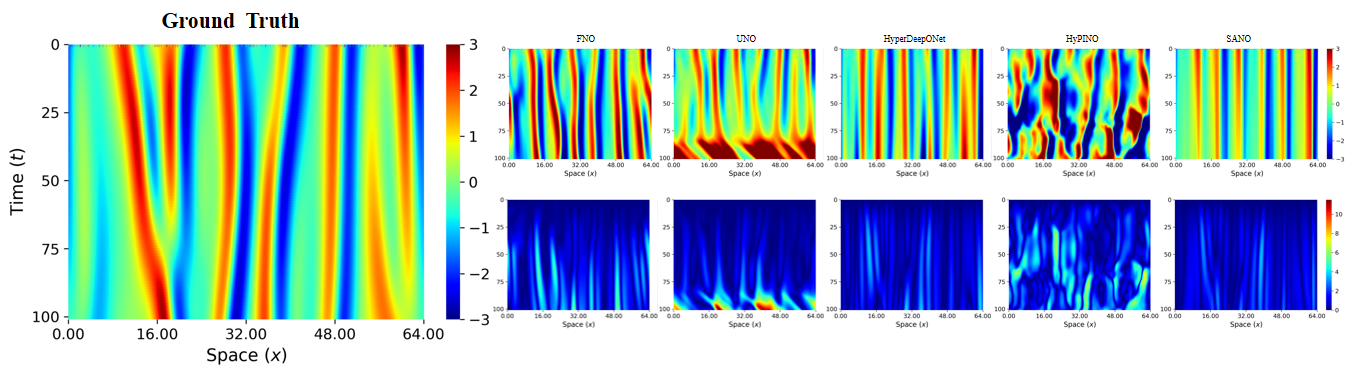}
    \caption{
    1D KS dynamics: ground-truth space--time solution, model predictions, and absolute-error maps.
    SANO reduces structural distortion and accumulated phase error over the rollout.
    }
    \label{fig:vis_ks}
\end{figure*}

\begin{figure*}[h]
    \centering
    \subfloat[2D AC benchmark]{
        \includegraphics[width=0.98\linewidth]{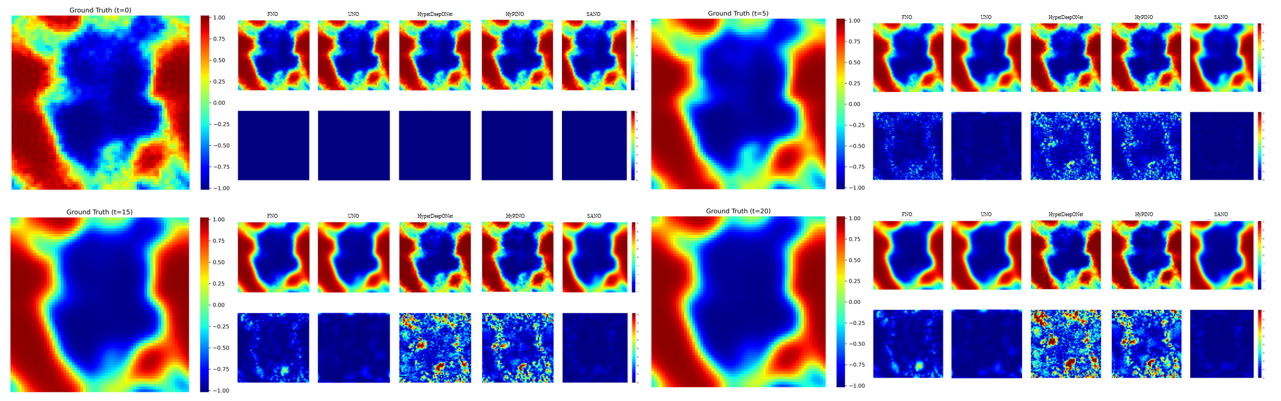}
        \label{fig:ac_visualization}
    }

    \vspace{0.3em}

    \subfloat[2D FKPP benchmark]{
        \includegraphics[width=0.98\linewidth]{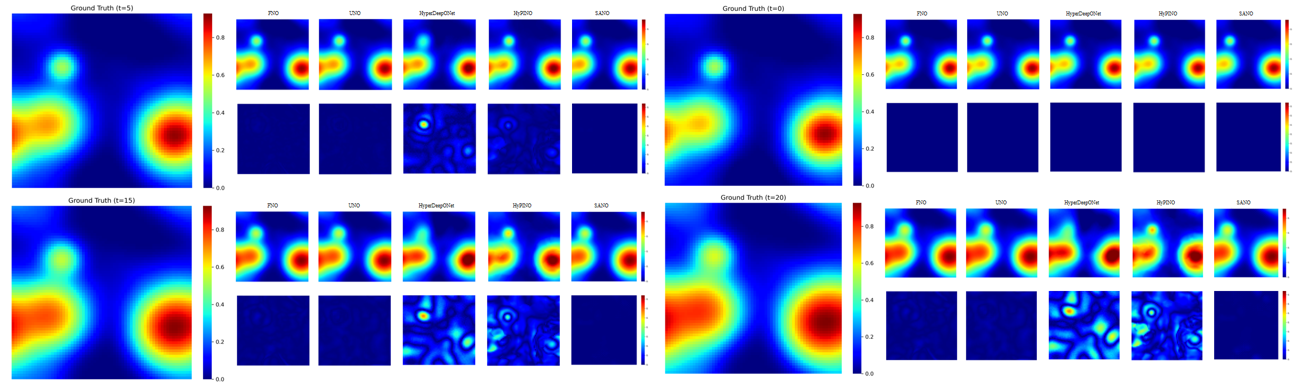}
        \label{fig:fisher_visualization}
    }
    \caption{
    Reaction--diffusion rollouts at \(t=0,5,15,\) and \(20\).
    (top) AC phase-field predictions and absolute-error maps.
    (bottom) FKPP concentration-field predictions and absolute-error maps.
    SANO better preserves evolving interfaces and propagating fronts throughout the rollout.
    }
    \label{fig:vis_ac_fisher}
\end{figure*}

\begin{figure*}[t]
    \centering
    \subfloat[Pressure]{
        \includegraphics[width=0.40\linewidth]{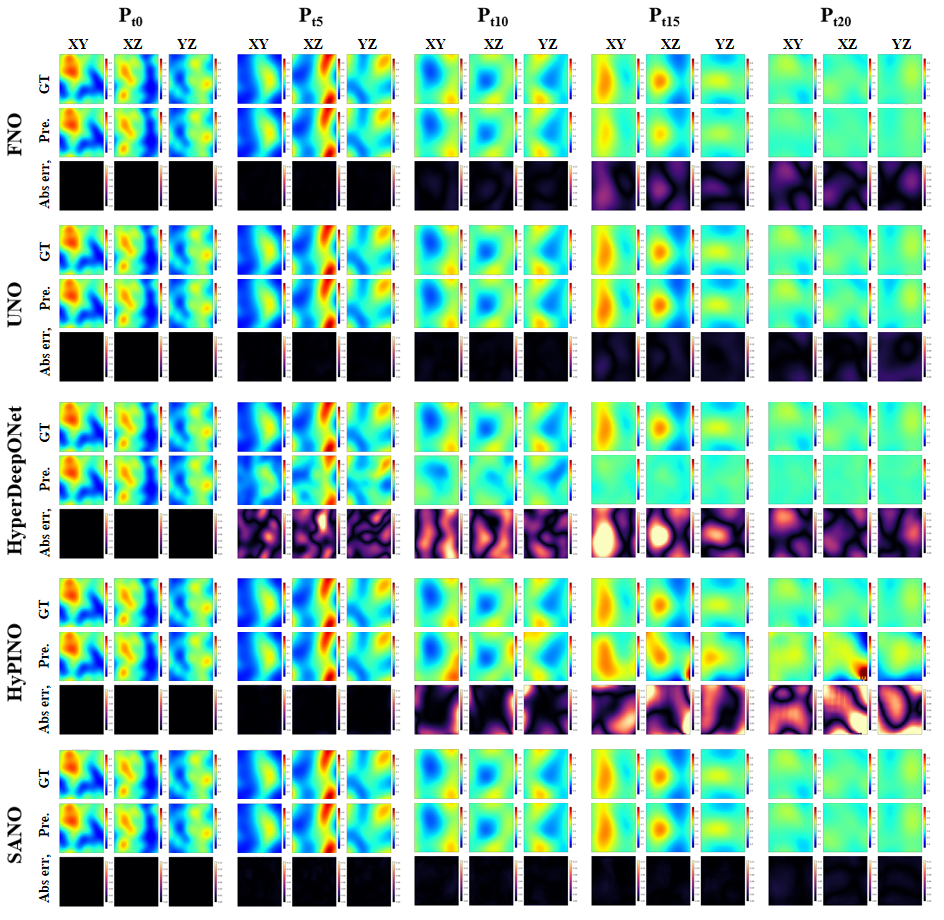}
        \label{fig:ns_pressure}
    }

    \vspace{0.3em}

    \subfloat[Density]{
        \includegraphics[width=0.40\linewidth]{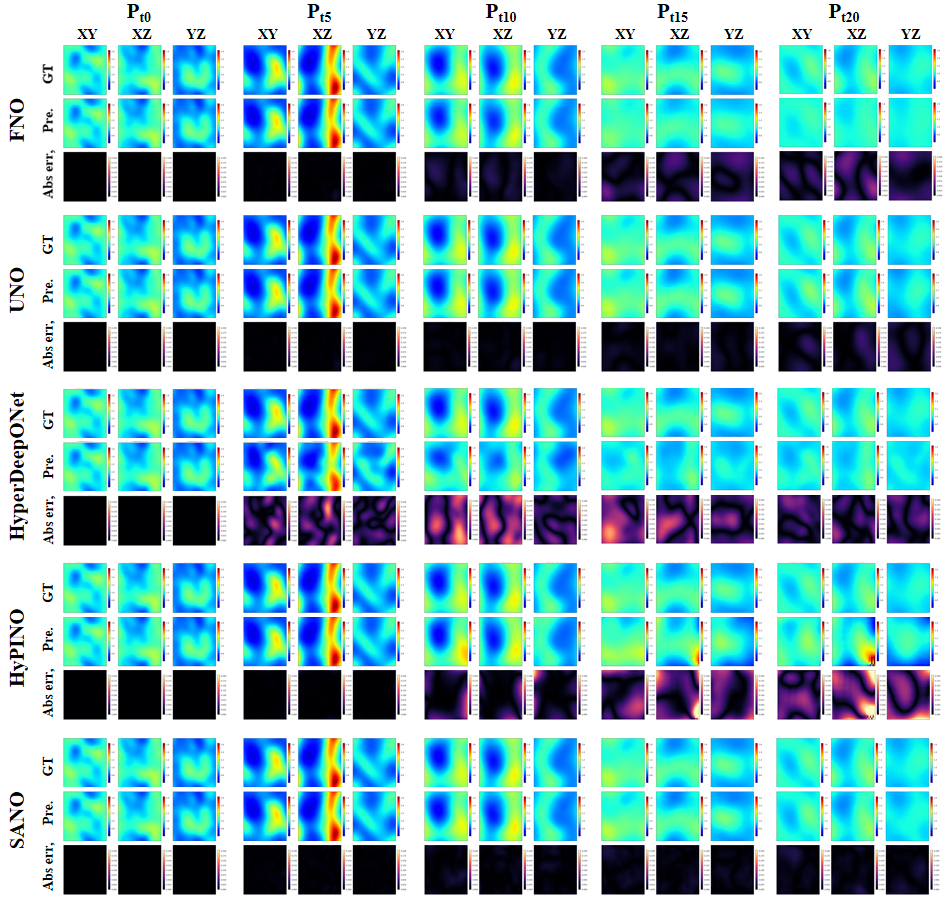}
        \label{fig:ns_density}
    }
    \caption{
    3D CF rollouts at \(t=0,5,10,15,\) and \(20\).
    (top) Pressure-field predictions and absolute-error maps on the \(XY\), \(XZ\), and \(YZ\) slices.
    (bottom) Density-field predictions and absolute-error maps on the same slices.
    SANO preserves the dominant component-wise structures while limiting structured error growth at later rollout steps.
    }
    \label{fig:vis_ns3d}
\end{figure*}

\end{document}